\documentclass[runningheads]{llncs}

\usepackage{eccv}

\usepackage{eccvabbrv}
\usepackage[normalem]{ulem}
\usepackage{graphicx}
\usepackage{booktabs}
\usepackage{algorithm}
\usepackage{multirow}
\usepackage{algorithmic}
\usepackage{caption}
\usepackage{wrapfig}
\usepackage[accsupp]{axessibility}  % Improves PDF readability for those with disabilities.
\usepackage{orcidlink}

\usepackage{hyperref}

\usepackage{orcidlink}
\makeatletter
\def\blfootnote{\xdef\@thefnmark{}\@footnotetext}
\makeatother

\begin{document}

\title{$h$-Flow: Flexible Flow-based Image Editing via Doob's $h$-Transform} 
% 如果有作者信息，写在这里
% \author{...}

% TODO FINAL: Replace with your author list. 
% Include the authors' OCRID for the camera-ready version, if at all possible.
% \author{First Author\inst{1}\orcidlink{0000-1111-2222-3333} \and
% Second Author\inst{2,3}\orcidlink{1111-2222-3333-4444} \and
% Third Author\inst{3}\orcidlink{2222--3333-4444-5555}}
% \titlerunning{Abbreviated paper title} % 如果有短标题放这里，没有可以注释掉

\author{Zehui Guo\inst{1}\orcidlink{0009-0005-9932-9440} \and
Zhen Wang\inst{2}\orcidlink{0009-0008-1091-7994} \and
Junwei Shu\inst{1}\orcidlink{0009-0006-1197-032X} \and
Yang Li\inst{1}\textsuperscript{,*}\orcidlink{0000-0001-9427-7665} \and
Changbo Wang\inst{1}\textsuperscript{,*}\orcidlink{0000-0001-8940-6418} \and
Long Chen\inst{2}\orcidlink{0000-0001-6148-9709}}

% 页眉处的作者缩写
\authorrunning{Z.~Guo et al.}

\institute{East China Normal University, Shanghai, China\\
\email{51275901098@stu.ecnu.edu.cn, 51265901091@stu.ecnu.edu.cn, yli@cs.ecnu.edu.cn, cbwang@cs.ecnu.edu.cn} \and
The Hong Kong University of Science and Technology, Hong Kong SAR, China\\
\email{zhenwang@ust.hk, longchen@ust.hk}} 

\maketitle

% TODO FINAL: Replace with your author list. 
% Include the authors' OCRID for the camera-ready version, if at all possible.
% \author{First Author\inst{1}\orcidlink{0000-1111-2222-3333} \and
% Second Author\inst{2,3}\orcidlink{1111-2222-3333-4444} \and
% Third Author\inst{3}\orcidlink{2222--3333-4444-5555}}

% TODO FINAL: Replace with an abbreviated list of authors.
% \authorrunning{F.~Author et al.}

% First names are abbreviated in the running head.
% If there are more than two authors, 'et al.' is used.

% TODO FINAL: Replace with your institution list.
% \institute{Princeton University, Princeton NJ 08544, USA \and
% Springer Heidelberg, Tiergartenstr.~17, 69121 Heidelberg, Germany
% \email{lncs@springer.com}\\
% \url{http://www.springer.com/gp/computer-science/lncs} \and
% ABC Institute, Rupert-Karls-University Heidelberg, Heidelberg, Germany\\
% \email{\{abc,lncs\}@uni-heidelberg.de}}

% \maketitle

% \maketitle % <--- 关键：必须先生成标题

% 将图片放在 \maketitle 和 \begin{abstract} 之间
\begin{figure}[h!] % <--- 改为 [h!]
  \centering
  \includegraphics[width=1\linewidth]{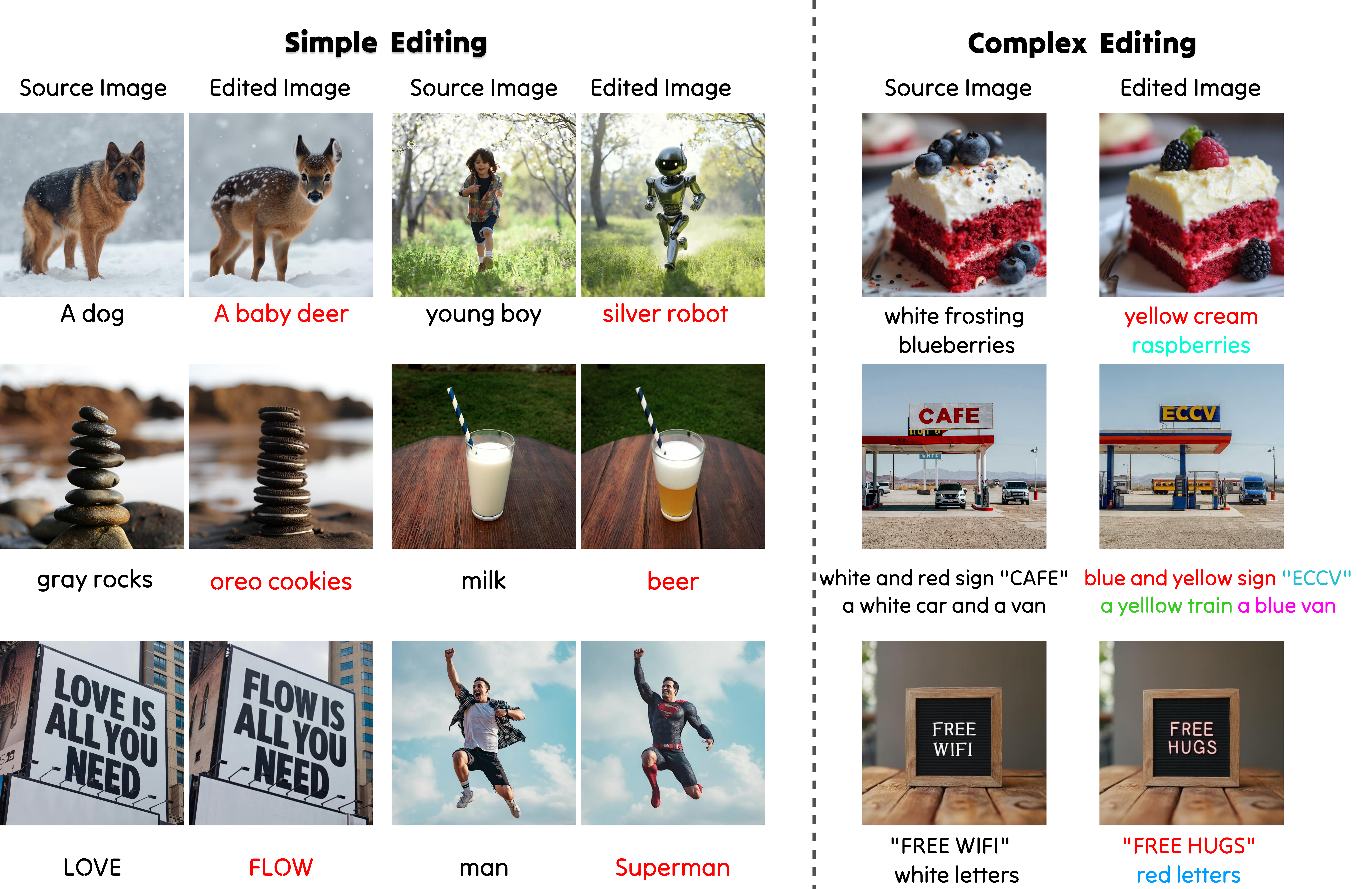}
  \caption{$h$-Flow enables faithful text-based editing across diverse scenarios.}
  \vspace{-2em}
  \label{fig:example}
\end{figure}

% $h$-Flow edits images by constructing a \emph{reverse-time bridge} over pre-trained Rectified Flow models. The method derives exact, closed-form reconstruction guidance from Doob's $h$-transform and decouples it from the editing signal via orthogonal projection, enabling faithful source preservation and effective semantic modification without training, gradients, or model-specific tuning. Text indicates the target edit; unmentioned regions are preserved automatically.
% ---------------------------------------------------------------
% TODO REVIEW: Replace with your title
% \title{$h$-Flow: Flexible Flow-based Image Editing via Doob's $h$-Transform} 

% TODO REVIEW: If the paper title is too long for the running head, you can set
% an abbreviated paper title here. If not, comment out.

% \maketitle

\newcommand{\zhen}[1]{{\color{orange}{$^\textbf{\emph{Zhen:}}$[#1]}}}

\begin{abstract}
\blfootnote{$^*$Corresponding author.}
Editing images with pre-trained text-to-image flow models typically requires carefully balancing target alignment with the desired prompt and source consistency with the original image. Existing approaches either rely on inversion-based pipelines or heuristic source-to-target trajectory constructions, which often depend on architecture-specific designs or are sensitive to hyperparameters. In this paper, we propose \textbf{h-flow}, a training-free and theoretically grounded flow-based editing framework. Inspired by Doob’s $h$-Transform, we reformulate image editing as conditional generation under multiple terminal events corresponding to source consistency and target alignment. We first extend the classical $h$-Transform from SDE-based models to the deterministic RF framework by constructing an equivalent SDE with identical marginals. Within this formulation, we design dedicated $h$-functions for source consistency and target alignment, yielding closed-form reconstruction guidance and velocity-based semantic editing signals. We further introduce a velocity orthogonal decomposition to decouple reconstruction and editing directions, enabling a controllable trade-off between the two objectives. Extensive experiments demonstrate that h-flow achieves effective, robust, and flexible editing across diverse scenarios. The code will be released soon.

\keywords{Text-based Image Editing \and Rectified Flow Model \and Doob's $h$-Transform}
\end{abstract}

\section{Introduction}
\label{sec:intro}

Rectified flow (RF) models~\cite{liuflow,lipman2022flow} have emerged as a powerful generative paradigm for various visual generation tasks. Benefiting from large-scale pretraining, text-to-image RF models~\cite{esser2024scaling, flux2024} have been widely adopted for text-based image editing~\cite{rout2025semantic,kulikov2025flowedit}, which aims to modify the visual contents of a source image according to a target prompt. The central challenge of this task lies in balancing the two common goals: \emph{target alignment} and \emph{source consistency}. Specifically, the synthesized image should faithfully adhere to the target prompt while preserving the editing-irrelevant regions unchanged from the source image.

To achieve this trade-off, early flow-based editing approaches predominantly adopt a two-stage, inversion-based paradigm~\cite{wang2025taming,rout2025semantic,kim2025reflex}. In the forward process, the source image is first inverted into a latent variable in the noise distribution, in the backward process, the edited image is generated from this latent under the guidance of the target prompt. Representative works mainly focus on improving inversion accuracy~\cite{rout2025semantic,kim2025reflex} for better source reconstruction, and further inject source conditions (\eg, attention maps~\cite{wang2025taming}) during the forward process to enhance source consistency. However, these strategies are largely heuristic and often rely on architecture-specific analysis, limiting their scalability and general applicability. More recent studies have pioneered inversion-free paradigms~\cite{kulikov2025flowedit,kim2025flowalign,jiang2026flowdc}, which directly construct transformation trajectories from the source image to the target image, enforcing strong structural consistency. Despite their empirical effectiveness, they remain sensitive to specific hyperparameter settings, such as random seeds and the number of flow steps, which constrain their robustness and generalization in practical scenarios.

Different from prior methods that are primarily motivated by empirical intuition without a unified theoretical grounding, we propose \textbf{\emph{h}-flow}, a training-free and flexible flow-based editing approach built upon the intrinsic probabilistic structure of flow models. Specifically, inspired by Doob’s $h$-Transform~\cite{doob1984classical,rogers2000diffusions,sarkka2019applied}, we reformulate image editing as a conditional generation problem under multiple terminal events, corresponding to the two fundamental objectives of editing: \emph{source consistency} with the input image and \emph{target alignment} with the target prompt. Since the classical $h$-Transform is defined for SDE-based stochastic processes, whereas RF models operate under a deterministic ODE formulation, we first construct an equivalent SDE that shares the same marginal distributions as the RF dynamics. This enables us to rigorously extend the $h$-Transform from diffusion/SDE-based models (\eg, h-Edit~\cite{nguyen2025h}) to the ODE-based RF framework, providing a principled theoretical bridge. Within this unified formulation, we explicitly design dedicated $h$-functions to encode the two core objectives of image editing. Specifically, the $h$-function associated with source consistency imposes a terminal constraint on the source image, which naturally induces an exact, closed-form reconstruction guidance along the flow trajectory. In contrast, the $h$-function for target alignment models the desired semantic transition toward the target prompt. Under this design, the editing signal is characterized by the deviation of the editing velocity from the source velocity, thereby explicitly capturing the semantic shift between the source and target prompts in the velocity field. 

To explicitly balance the two objectives, we further propose a velocity orthogonal decomposition strategy, which projects the editing velocity onto complementary subspaces. This design ensures that reconstruction and editing directions act independently, fully decoupling source preservation and target modification at the velocity level, and thus inherently achieving a controllable trade-off between source consistency and target alignment. Moreover, since $h$-flow modifies the backward process while remaining agnostic to the specific forward construction, our method can be seamlessly combined with different forward processes in a plug-and-play manner, further enhancing editing flexibility and performance. Extensive experiments demonstrate the effectiveness, robustness, and generalization capability of \textbf{\emph{h}-flow} across diverse editing scenarios.

In summary, our contributions can be summarized as follows:
\begin{itemize}
    \item We propose \textbf{\emph{h}-flow}, a training-free and theoretically grounded flow-based image editing framework that reformulates editing via Doob’s $h$-Transform.
    \item We design dedicated $h$-functions to explicitly encode source consistency and target alignment. Together with a velocity orthogonal decomposition strategy, our method inherently decouples reconstruction and editing to achieve a controllable trade-off, and can be flexibly combined with different forward processes in a plug-and-play manner.
    \item Extensive experiments and ablations demonstrate the effectiveness and robustness of \textbf{\emph{h}-flow} under diverse editing scenarios.
\end{itemize}

\section{Related work}

\noindent{\textbf{Inversion-based Editing}}
This paradigm first inverts the source image into a noise latent through a forward inversion process, and then re-generates the edited result under the target prompt in the backward process. To mitigate error accumulation during inversion and subsequent generation, a line of works~\cite{mokady2023null,wang2025taming,rout2025semantic,dao2026steerflow} focuses on improving inversion accuracy. For example, RF-Solver-Edit~\cite{wang2025taming} designs higher-order ODE solvers to reduce discretization errors, while RF-Inversion~\cite{rout2025semantic} formulates inversion as a dynamic optimal control problem to refine the trajectory. However, even accurate inversion cannot fully guarantee source fidelity after editing. Therefore, another line of works~\cite{hertzprompt,tumanyan2023plug,kim2025reflex,xu2024freetuner,wang2025taming} enhances source consistency by injecting additional source conditions during the backward generation process. Despite improved source preservation, these methods typically rely on heuristic designs and architecture-specific modifications, limiting their generality and flexibility.

\noindent{\textbf{Inversion-Free Editing.}}
A growing body of work seeks to bypass explicit inversion. SDEdit~\cite{mengsdedit} directly perturbs the source image with random noise to replace the inversion step. However, the noise strength introduces a hard trade-off between editability and source preservation. FlowEdit~\cite{kulikov2025flowedit} pioneers a dual-trajectory paradigm that constructs a direct source-to-target ODE by coupling two noise-to-image flows, thereby avoiding inversion. Nevertheless, per-step random noise sampling can cause trajectory instability. FlowAlign~\cite{kim2025flowalign} further improves stability within this framework by formulating source preservation as an optimal control problem and optimizing the trajectory accordingly. Despite these advances, such source-to-target constructions remain largely heuristic, lacking a unified theoretical foundation, and are sensitive to hyperparameter choices such as random seeds and flow steps.

\noindent{\textbf{{Doob's h-Transform in Image Editing.}} 
Following~\cite{liu2023learning,zhou2024denoising,kieu2025bidirectional,wang2026coarse}, we define bridges as stochastic processes conditioned on a predefined terminal event, derived from a base Markov process via Doob's $h$-transform~\cite{doob1984classical,rogers2000diffusions,sarkka2019applied}. Specifically, h-Edit~\cite{nguyen2025h} is the first to introduce the $h$-Transform into image editing. However, it is developed within the diffusion SDE framework and approximates the source constraint through a text prompt proxy, without enforcing an explicit hard constraint on the source image. In contrast, we extend the $h$-Transform to the deterministic flow matching framework, and derive an exact closed-form guidance for the source image constraint, resulting in a more principled, flexible, and controllable editing formulation.

\section{Preliminaries}

\subsection{Rectified Flow}

Rectified Flow (RF)~\cite {liuflow} defines a forward process as a linear interpolation between a data sample $x_0 \sim p_0$ and Gaussian noise $\epsilon \sim \mathcal{N}(0, I)$: 
\begin{equation}
    x_t = (1-t)x_0 + t\epsilon, \quad t \in [0, 1]
    \label{eq:rf_forward}
\end{equation}
 The conditional transition kernel is an isotropic Gaussian:
 \begin{equation}
    p(x_t \mid x_0) = \mathcal{N}\bigl(x_t;\,(1-t)x_0,\,t^2 I\bigr). 
    \label{eq:rf_kernel}
 \end{equation}
A velocity network $v_\theta(x_t, t)$ is trained via the flow matching objective~\cite{lipman2022flow} to approximate $\mathbb{E}[\epsilon - x_0 \mid x_t]$. Samples are generated by solving the ODE $dx_t/dt = v_\theta(x_t, t)$ backward from $t{=}1$ to $t{=}0$.

Applying the Tweedie formula to the RF kernel~\eqref{eq:rf_kernel} gives the posterior mean $\mathbb{E}[x_0 \mid x_t]$, which we approximate using the learned velocity:
 \begin{equation}
    \hat{x}_0 \triangleq x_t - t\,v_\theta(x_t, t) \approx \mathbb{E}[x_0 \mid x_t].
    \label{eq:tweedie}
 \end{equation}
The approximation is exact when $v_\theta$ matches the true conditional expectation $\mathbb{E}[\epsilon - x_0 \mid x_t]$.
From the Gaussian form of Eq.~\eqref{eq:rf_kernel} and Eq.~\eqref{eq:tweedie}, one can derive the score--velocity identity:
\begin{equation}
\nabla_{x_t} \log p_t(x_t) = \frac{(1-t)\,\hat{x}_0 - x_t}{t^2},
\label{eq:score_identity}
\end{equation}
which will be used repeatedly in our derivations.

\subsection{Inversion-based Image Editing}
\label{sec:prelim_editing}

Given a source image $I^{src}$, a source prompt $c^{src}$, and a target prompt $c^{edit}$, text-based image editing aims to produce an output $I^{edit}$ that (i) reflects the semantic content described by $c^{edit}$ (\emph{target alignment}), while (ii) preserving the structure and unedited regions of $I^{src}$ (\emph{source consistency}). Both images are encoded as $x_0^{src} = \mathcal{E}(I^{src})$ and $x_0^{edit} = \mathcal{E}(I^{edit})$ via a pretrained VAE encoder $\mathcal{E}$, and all operations are performed in this latent space.

Inversion-based editing typically consists of two stages. The forward process maps the source image to an intermediate latent state $x_{t_N}$ at some time $t_N \in (0,1]$, from which the backward generation begins. This can be achieved either through iterative inversion, $x_{t_N} = x_0^{src} + \int_{0}^{x_{t_N}} v_\theta(x_t^{src}, t, c^{src}) \, dt$ or via on-step noising  $x_{t_N} = (1{-}t_N)\,x_0^{src} + t_N\,\epsilon$. The backward process then generates the edited latent from this intermediate state under the target prompt by integrating the conditional flow in reverse time:
\begin{align}
\label{eq:edit}
x_0^{edit} &= x_{t_N} - \int_{0}^{{t_N}} {v}_\theta(x_t, t, c^{edit}) \, dt
\end{align}

\subsection{Doob's h-Transform}

Doob's h-Transform~\cite{doob1984classical,rogers2000diffusions,sarkka2019applied} is a classical tool for conditioning a Markov process on a future event. Consider an Itô SDE:
\begin{equation}
    dx = f(x_t, t)\,dt + g(t)\,dw.
\end{equation}
 To condition this process on a terminal event $A$ (e.g., $x_0 \in A$), the h-Transform introduces an additive drift correction proportional to the log-gradient of the harmonic function $h(x_t, t) = p_t(A \mid x_t)$: 
 \begin{equation}
      dx = \bigl[f(x_t, t) + g^2(t)\,\nabla_{x_t} \log p_t(A \mid x_t)\bigr]\,dt + g(t)\,dw.
      \label{eq:prelim_doob_h_SDE_form}
 \end{equation}
The corresponding probability-flow ODE (PF-ODE)~\cite{chen2023probability}, which shares the same marginal distributions, is:
\begin{equation}
\frac{dx}{dt} = \underbrace{f(x_t,t) - \frac{g^2(t)}{2}\,\nabla_{x_t}\!\log p_t(x_t)}_{\widetilde{f}\;(\text{unconditioned PF-ODE drift})} + \frac{g^2(t)}{2}\,\nabla_{x_t}\!\log p_t(A \mid x_t).
\label{eq:h_pfode_prelim}
\end{equation}
The first part $\widetilde{f}$ is the standard (unconditioned) PF-ODE drift; the second part is the bridge guidance with a \emph{positive} sign. When $A$ corresponds to a class label, Eq.~(\ref{eq:h_pfode_prelim}) recovers classifier guidance~\cite{dhariwal2021diffusion}; when combined with a null-condition baseline, it reduces to classifier-free guidance~\cite{ho2021classifier}. The $h$-transform thus provides a unified probabilistic foundation for guided generation.
 
\noindent\textbf{The $g > 0$ requirement.} The guidance terms in Eqs.~\eqref{eq:prelim_doob_h_SDE_form}--\eqref{eq:h_pfode_prelim} are scaled by $g^2(t)$ or $g^2(t)/2$. When $g(t) > 0$, the infinitesimal transition kernel has non-degenerate variance, and the $h$-transform reweights it smoothly. When $g \equiv 0$---as in
Rectified Flow---the transition kernel collapses to a Dirac delta, and the $h$-transform is undefined. Resolving this incompatibility is the central technical challenge addressed in Sec.~\ref{sec:method}. The $h$-transform is presented above in forward time for clarity. When the process runs in reverse time ($t{:}\,1{\to}0$), the terminal event $\{x_0 \in A\}$ lies at the process's endpoint; the $h$-transform applied to the reverse-time PF-ODE carries a negative sign, because the guidance must deflect the trajectory against its natural flow direction to reach the target at $t{=}0$.

\section{Method}
\label{sec:method}

\noindent\textbf{Problem formulation: editing as conditional generation.}
Given the source latent $x_0^{src} = \mathcal{E}(z^{src})$ and a target prompt $c^{edit}$, the goal is to generate an edited latent $x_0^{edit}$ that satisfies both constraints simultaneously. We formulate this as sampling from:
\begin{equation}
x_0^{edit} \sim p(\,\cdot \mid x_0^{src},\, c^{edit}),
\label{eq:editing_objective}
\end{equation}
where the conditional distribution encodes proximity to $x_0^{src}$ in unedited regions and alignment with $c^{edit}$ in edited regions. In the RF framework, this amounts to modifying the backward sampling ODE $dx_t/dt = v_\theta(x_t,t)$ so that the
output remains close to $x_0^{src}$ except where $c^{edit}$ specifies changes. The two objectives are inherently in tension: the velocity direction needed to reconstruct $x_0^{src}$ and the direction needed to align with $c^{edit}$ are generally not parallel and may conflict. How to decouple and effectively balance them is the key of this problem.

\begin{figure}[htbp]
    \centering
    \includegraphics[width=\linewidth]{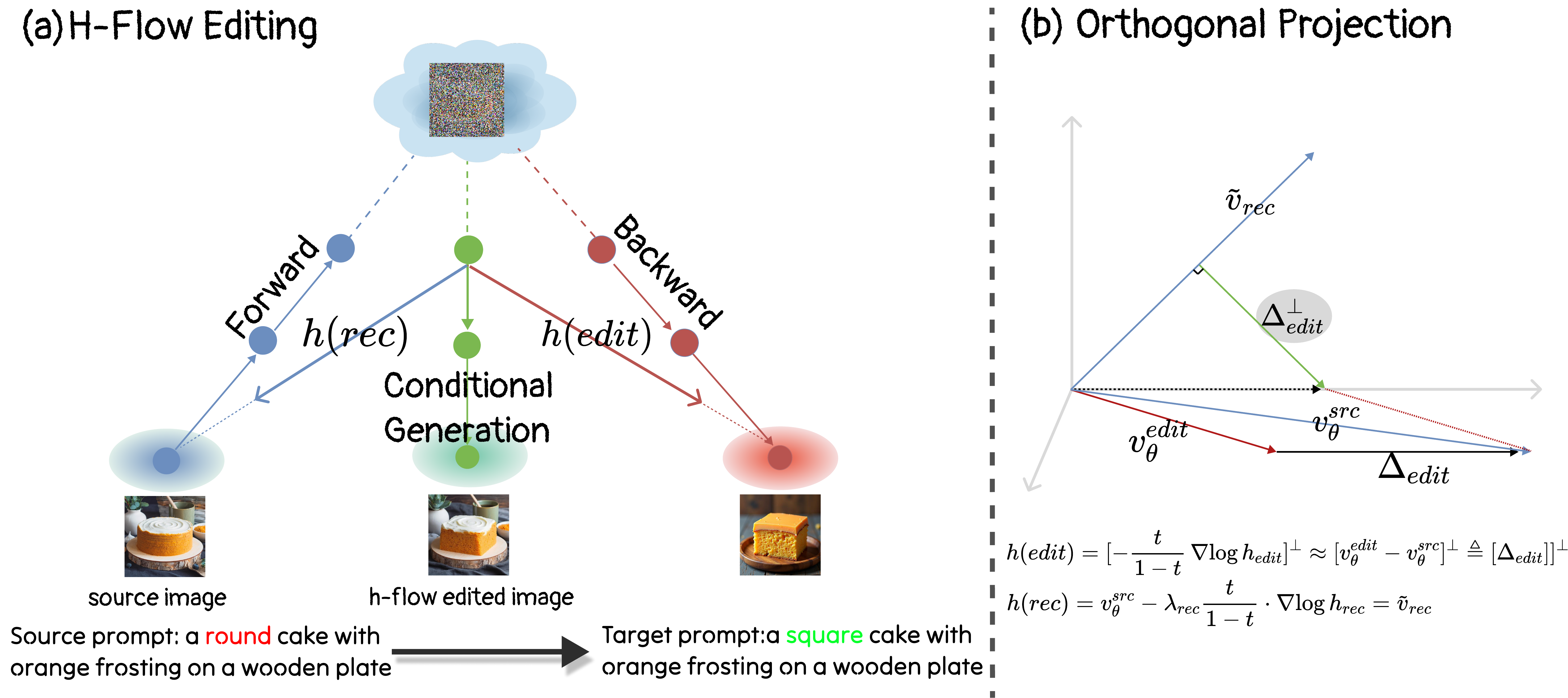}
    \caption{\textbf{Overview of $h$-Flow.} \textbf{(a) Dual-Guidance via $h$-Transforms.} We formulate image editing as a conditional generation process governed by two terminal constraints. The generative trajectory is jointly guided by a reconstruction function $h_{rec}$ (ensuring source structural fidelity) and an editing function $h_{edit}$ (enforcing target semantic alignment). \textbf{(b) Orthogonal Velocity Projection.} To resolve the inherent tension between reconstruction and editing, we project the editing direction $\Delta_{edit}$ onto the orthogonal complement of the reconstruction velocity $\tilde{v}_{rec}$. The isolated orthogonal component $\Delta_{edit}^{\perp}$ exclusively alters the semantic content without interfering with the source structure, enabling perfectly decoupled control.}
    \label{fig:main_pipeline}
\end{figure}

%=============================================
\subsection{Rectified Flow Bridge Construction}
\label{sec:bridge}

A rigorous mathematical tool for such conditioning is Doob's $h$-transform~\cite{doob1984classical}. For a reverse-time
generative SDE $dx_t = f\,dt + g\,d\bar{w}_t$ with marginals $\{p_t\}$, conditioning on a terminal event $\{x_0 \in A\}$
modifies the probability-flow ODE to:
\begin{equation}
\frac{dx_t}{dt} = \widetilde{f}(x_t,t)
- \frac{g^2}{2}\,\nabla_{x_t}\!\log h(x_t,t),
\label{eq:bridge_pfode}
\end{equation}
where $\widetilde{f} = f - \frac{g^2}{2}\nabla\!\log p_t$ is the unconditioned PF-ODE drift, and $h(x_t,t)$ is a product
of harmonic functions encoding the constraints~\cite{du2023reduce} (detailed in the Supplementary Material).
% (see Supplementary~\ref{sup:reverse_time_sign} for the sign derivation)
\noindent\textbf{The deterministic-ODE obstacle.} 
Rectified Flow (RF) generates samples via a purely deterministic ODE where the diffusion coefficient $g(t) \equiv 0$. Consequently, the $h$-transform guidance term $\frac{g^2}{2}\nabla\!\log h$ vanishes identically. This structural incompatibility means the continuous-time $h$-transform cannot be directly applied to RF models. 

\noindent\textbf{Equivalent SDE construction.} 
To circumvent this, we exploit the property that infinitely many SDEs can share the exact same marginal distributions $\{p_t\}$ as a given ODE---they differ only in stochastic fluctuations that cancel out marginally. We construct an equivalent It\^{o} SDE with diffusion coefficient $g_{eq}(t) = \sqrt{2t/(1{-}t)}$, chosen so that $g_{eq}(0){=}0$ (no noise at the clean-data endpoint) and $g_{eq}(t){>}0$ for $t{>}0$. We detail in Supplementary how this construction yields an equivalent process that perfectly matches the RF marginals. 
% Supplementary~\ref{sup:equiv_sde_proof}
Substituting $g_{eq}$ into the bridge PF-ODE~\eqref{eq:bridge_pfode} and noting that $g_{eq}^2/2 = t/(1{-}t)$ and $\widetilde{f}_{eq} = v_\theta$ (see the Supplementary Material), we obtain
%(Supplementary~\ref{sup:equiv_sde_proof})
\begin{equation}
\frac{dx_t}{dt} = v_\theta(x_t,t) - \frac{t}{1-t}\nabla_{x_t}\!\log h(x_t,t),
\label{eq:rf_bridge}
\end{equation}
Recall that the harmonic function $h(x_t, t)$ encodes the probability of reaching a specified terminal event at $t{=}0$. In our image editing formulation, the desired terminal state is jointly governed by two simultaneous conditions: maintaining structural fidelity to the source image ($src$) and achieving semantic alignment with the target prompt ($edit$). By treating these two terminal constraints as conditionally independent events given the intermediate state $x_t$, the overall harmonic function naturally factorizes into a product of two experts:
\begin{equation}
h = h_{rec} \cdot h_{edit} \quad \implies \quad
\nabla\!\log h = \nabla\!\log h_{rec} + \nabla\!\log h_{edit}.
\label{eq:product_h}
\end{equation}
In the log-gradient space, this factorization elegantly translates into an additive superposition of a reconstruction force and an editing force. Because our equivalent SDE strictly shares the RF marginals, this dual-guidance mechanism transfers exactly to the deterministic RF framework without altering the pre-trained model's unconditional prior. This formulation provides a rigorous mathematical bridge from abstract terminal goals to concrete ODE modifications, leading directly to the specific design of each guidance term.
\subsection{$h$-Function Design}
\label{sec:Guidance}

The RF bridge equation~\eqref{eq:rf_bridge} requires specifying the harmonic function $h = h_{rec} \cdot h_{edit}$. We now derive closed-form expressions for each term, converting the abstract bridge equation into a computable editing ODE.

\subsubsection{Source preservation ($h_{rec}$)}

The source constraint corresponds to the terminal event $\{x_0 = x_0^{src}\}$. Setting $h_{rec}(x_0,0) = \delta(x_0 - x_0^{src})$ and applying the harmonic property~\cite{doob1984classical} gives $h_{rec}(x_t,t) = p(x_0^{src} \mid x_t)$. Expanding via Bayes' rule and evaluating the gradients under RF's Gaussian kernel (see the Supplementary Material), we obtain:
%Supplementary~\ref{sup:edit_derivation}
\begin{equation}
\frac{t}{1-t}\cdot\nabla\!\log h_{rec}
= \frac{x_0^{src} - \hat{x}_0}{t}
= v_\theta^{src} - v_{rec},
\label{eq:h_rec_velocity}
\end{equation}
where $\hat{x}_0 = x_t - t\,v_\theta^{src} \approx \mathbb{E}[x_0 \mid x_t, c^{src}]$ is the Tweedie posterior mean
and we define the \textbf{reconstruction velocity}:
\begin{equation}
v_{rec} \;\triangleq\; \frac{x_t - x_0^{src}}{t}.
\label{eq:v_rec}
\end{equation}

This is the unique velocity that steers $x_t$ straight ahead, arriving exactly at $x_0^{src}$ at the end of the generative process ($t{=}0$). Substituting Eq.~\eqref{eq:h_rec_velocity} into the RF bridge equation~\eqref{eq:rf_bridge}, the guidance term $-\frac{t}{1-t}\nabla\!\log h_{rec}$ becomes $v_{rec} - v_\theta^{src}$, and the conditioned velocity reduces
to exactly $v_{rec}$. To allow tunable reconstruction strength, we relax the exact bridge and define:

\begin{equation}
\tilde{v}_{rec} \;=\; v_\theta^{src} \;+\; \lambda_{rec}\,v_{rec},
\label{eq:vbase}
\end{equation}
where $\lambda_{rec} \geq 0$. We adopt this additive form rather than the residual alternative
$v_\theta^{src} + \lambda(v_{rec} - v_\theta^{src})$, because the latter contaminates the error-free $v_{rec}$ with model prediction error through the subtraction $v_{rec} - v_\theta^{src}$; the additive form preserves $v_{rec}$ as a clean structural anchor (see Supplementary for further discussion on this parameterization).

\subsubsection{Text-guided editing ($h_{edit}$)}

The editing constraint encodes the target prompt $c^{edit}$. We set $h_{edit}(x_0,0) = p(c^{edit} \mid x_0)$, which propagates to $h_{edit}(x_t,t) = p(c^{edit} \mid x_t)$ by the harmonic property. Unlike $h_{rec}$, this score has no closed form.
Expanding $\nabla\!\log h_{edit}$ via Bayes' rule yields a
difference of conditional scores. To isolate the semantic change
from $c^{src}$ to $c^{edit}$, we replace the unconditional
baseline $\nabla\!\log p_t(x_t)$ with the source-conditioned
score $\nabla\!\log p(x_t \mid c^{src})$, effectively computing
$\nabla\!\log[p(c^{edit} \mid x_t) / p(c^{src} \mid x_t)]$.
Approximating both conditional scores via their Tweedie estimates
and multiplying by the bridge coefficient $t/(1{-}t)$, all
time-dependent factors cancel exactly
(As shown in the Supplementary Material), yielding:
%(Supplementary~\ref{sup:edit_derivation})
\begin{equation}
-\frac{t}{1-t}\,\nabla\!\log h_{edit}
\;\approx\; v_\theta^{edit} - v_\theta^{src}
\;\triangleq\; \Delta_{edit}.
\label{eq:delta_edit}
\end{equation}
We call $\Delta_{edit}$ the \textbf{editing direction}. Compared
to the standard CFG direction
$v_\theta^{edit} - v_\theta^{\varnothing}$, our formulation
discards the shared content between $c^{edit}$ and $c^{src}$ and
retains only the novel semantic change.

% ============================================================
% §4.4 Orthogonal Decomposition
% ============================================================

\subsection{Orthogonal Decomposition}
\label{sec:ortho}

The product-of-experts framework combines the two guidance terms additively. However, in high-dimensional velocity space, $\Delta_{edit} = v_\theta^{edit} - v_\theta^{src}$ may retain a component along the reconstruction-guided direction $\tilde{v}_{rec}$ that interferes with source preservation. We project $\Delta_{edit}$ onto the orthogonal complement of $\tilde{v}_{rec}$:
\begin{equation}
\Delta_{edit}^{\perp} = \Delta_{edit} - \frac{\langle \Delta_{edit},\, \tilde{v}_{rec}\rangle}{\|\tilde{v}_{rec}\|^2}\,\tilde{v}_{rec}.
\label{eq:ortho_proj}
\end{equation}
By construction, $\langle \Delta_{edit}^{\perp},\, \tilde{v}_{rec}\rangle = 0$: the editing signal now operates exclusively in directions orthogonal to the reconstruction flow. This is analogous to projected gradient descent in constrained optimization, where updates are projected onto the feasible subspace.

The projection decomposes $\Delta_{edit}$ into two interpretable components:
\begin{equation}
\Delta_{edit} = \underbrace{\frac{\langle \Delta_{edit},\, \tilde{v}_{rec}\rangle}{\|\tilde{v}_{rec}\|^2}\,\tilde{v}_{rec}}_{\text{parallel to } \tilde{v}_{rec}\;(\text{discarded})} \;+\; \underbrace{\Delta_{edit}^{\perp}}_{\text{orthogonal to } \tilde{v}_{rec}\;(\text{kept})}.
\label{eq:decomp}
\end{equation}
The parallel component overlaps with the reconstruction direction and would either redundantly reinforce or destructively counteract it; discarding it eliminates the fidelity--editability entanglement. The orthogonal component carries only the \emph{novel semantic content} of $c^{edit}$ that is absent from the source-preserving trajectory.

Varying $\lambda_{edit}$ now modifies the edit without degrading fidelity, and varying $\lambda_{rec}$ adjusts fidelity without suppressing the edit direction. The two controls are fully decoupled. The computational cost is one inner product and one vector subtraction per step.

% ============================================================
% §4.5 Bridge Traversal
% ============================================================
\subsection{The Complete $h$-Flow Framework}
\label{sec:pipeline}

Starting from the RF bridge equation~\eqref{eq:rf_bridge} and incorporating the reconstruction guidance~\eqref{eq:vbase}, editing direction~\eqref{eq:delta_edit}, and orthogonal projection~\eqref{eq:ortho_proj}, we arrive at the editing ODE:
\begin{equation}
\frac{dx_t}{dt}
= \underbrace{v_\theta^{src} + \lambda_{rec}\,v_{rec}}_{\tilde{v}_{rec}}
+ \;\lambda_{edit}\,\Delta_{edit}^{\perp}.
\label{eq:edit_ode}
\end{equation}
By construction, $\tilde{v}_{rec}$ and $\Delta_{edit}^{\perp}$ are orthogonal: the two controls $\lambda_{rec}$ and $\lambda_{edit}$ act on independent subspaces without mutual interference.

The complete procedure is given in Algorithm~\ref{alg:editing}. The method is training-free and
gradient-free; its only interface to the pre-trained model is two forward passes of $v_\theta$ per step. The backward editing mechanism is agnostic to the forward initialization---any inversion, noise injection, or hybrid strategy is compatible, and improvements in the forward process translate directly to better results.

\begin{algorithm}[t]
\caption{$h$-Flow: RF Bridge Editing}
\label{alg:editing}
\begin{algorithmic}[1]
\REQUIRE Source $z^{src}$, prompts $c^{src}$ and $c^{edit}$, $v_\theta$, VAE $\mathcal{E}/\mathcal{D}$, $\lambda_{rec}$, $\lambda_{edit}$, schedule $\{t_N > \cdots > t_0{=}0\}$
\STATE $x_0^{src} \leftarrow \mathcal{E}(z^{src})$
\STATE $x_{t_N} \leftarrow \textsc{Invert}(x_0^{src}, v_\theta, c^{src})$
    % \hfill $\triangleright$ Modular forward process
\FOR{$n = N, \ldots, 1$}
    \STATE $t \leftarrow t_n$; \quad $\delta t \leftarrow t_n - t_{n-1}$
    % \STATE \textit{// Reconstruction guidance}
    \STATE $v^{src} \leftarrow v_\theta(x_t, t, c^{src})$
    \STATE $v_{rec} \leftarrow (x_t - x_0^{src})\,/\,t$
        % \hfill $\triangleright$ Eq.~(\ref{eq:v_rec}): exact reconstruction velocity
    \STATE $\tilde{v}_{rec} \leftarrow v^{src} + \lambda_{rec} \cdot v_{rec}$
        % \hfill $\triangleright$ Eq.~(\ref{eq:vbase})
    % \STATE \textit{// Editing: orthogonal to $\tilde{v}_{rec}$}
    \STATE $v^{edit} \leftarrow v_\theta(x_t, t, c^{edit})$
    \STATE $\Delta_{edit} \leftarrow v^{edit} - v^{src}$
        % \hfill $\triangleright$ Eq.~(\ref{eq:delta_edit})
    \STATE $\Delta_{edit}^{\perp} \leftarrow \Delta_{edit} - \dfrac{\langle \Delta_{edit},\,\tilde{v}_{rec}\rangle}{\|\tilde{v}_{rec}\|^2}\,\tilde{v}_{rec}$
        % \hfill $\triangleright$ Eq.~(\ref{eq:ortho_proj})
    % \STATE \textit{// Bridge step}
    \STATE $v \leftarrow \tilde{v}_{rec} + \lambda_{edit} \cdot \Delta_{edit}^{\perp}$
    \STATE $x_{t_{n-1}} \leftarrow x_t - v \cdot \delta t$
\ENDFOR
\RETURN $\mathcal{D}(x_{t_0})$
\end{algorithmic}
\end{algorithm}

\section{Experiments}
\label{sec:experiments}

\begin{figure}[h!] % <--- 改为 [h!]
  \centering
  \includegraphics[width=1\linewidth]{pic/pie_compare.pdf}
  \caption{\textbf{Qualitative comparisons.} The source prompt and the target prompt is shown on the bottom for each example. The first column displays the real input image. The subsequent columns present the editing results of our proposed $h$-Flow, followed by various state-of-the-art baselines. }
  \vspace{-2em}
  \label{fig:pie-compare}
\end{figure}

\subsection{Experimental Setup}
\label{sec:setup}

% TODO: 填入实际超参数
\noindent{\textbf{{Dataset.}}
We evaluate on the PIE-Bench~\cite{ju2024pnp} and PIE-Bench++~\cite{huang2024paralleledits} datasets, which contain 700 diverse images covering humans, animals, and objects in varied environments. Each sample includes source and target text descriptions and an annotated mask indicating the editing region. PIE-Bench focuses on single-aspect edits (e.g., object addition/removal, replacement, color, material, style, background, action, texture, and position), while PIE-Bench++ extends the same images with multi-aspect prompts requiring simultaneous modifications to multiple objects or attributes, enabling evaluation under more complex editing scenarios.

\noindent{\textbf{{Metrics.}} Following~\cite{ju2024pnp}, we report two families of metrics. \emph{Source Consistency} is measured on non-edited regions (defined by the provided masks) via PSNR, LPIPS~\cite{zhang2018unreasonable}, SSIM~\cite{wang2004image}, and MSE, and on the full image via structure distance~\cite{tumanyan2022splicing}. \emph{Target Alignment} is measured by CLIP similarity~\cite{radford2021learning} between the edited image and the target prompt, both on the entire image (CLIP-Entire) and on the edited region (CLIP-Edited). Besides above standard metrics, we further evaluated \emph{Image Quality} with {HPSv2}~\cite{wu2023human} and {Aesthetic Score (AS)}~\cite{discus0434_aesthetic_v25}, which 
quantify perceptual fidelity and visual quality.

\noindent{\textbf{{Baselines.}}
We compare $h$-Flow against seven baselines across two categories.
\emph{Inversion-based}:
RF-Inversion~\cite{rout2025semantic},
RF-Solver~\cite{wang2025taming},
UniEdit-Flow~\cite{jiao2025unieditflowunleashinginversionediting},
FireFlow~\cite{dengfireflow},
and standard ODE inversion (OdeInv).
\emph{Inversion-free}:
FlowEdit~\cite{kulikov2025flowedit}.
SDEdit~\cite{mengsdedit}.
All methods use officially released code with recommended
hyperparameters and are evaluated on the same FLUX.1-dev backbone. 
% \zhen{too long}
% 已修改

\noindent{\textbf{Implementation Details}}

We evaluate on FLUX.1-dev. For the backward editing process, we use $N{=}28$ Euler steps with a schedule on $[0,1]$. Unless otherwise specified, the forward process defaults to RF-Inversion~\cite{rout2025semantic} with $n{=}24$ steps. The hyperparameters $\lambda_{rec}$ and $\lambda_{edit}$ are set to \texttt{1.2} and \texttt{1.5}, respectively.

\subsection{Experimental Results}
\label{Experimental Results}

\begin{table*}[t]
\centering
\caption{Quantitative comparison on PIE-Bench and PIE-Bench++. Best in \textbf{bold}, second best \underline{underlined}, third best \uwave{wavy}. ``Rank'' indicates average rank across metrics.}
\label{tab:pie}
\resizebox{\textwidth}{!}{
% 定义11列：2列基础 + 5列Source + 2列Target + 2列Image Quality
\begin{tabular}{lc|ccccc|cc|cc}
\toprule
\multirow{2}{*}{Method} & \multirow{2}{*}{Rank$\downarrow$} & \multicolumn{5}{c|}{\emph{Source Consistency}} & \multicolumn{2}{c|}{\emph{Target Alignment}} & \multicolumn{2}{c}{\emph{Image Quality}} \\
\cmidrule(lr){3-7} \cmidrule(lr){8-9} \cmidrule(lr){10-11}
& & PSNR$\uparrow$ & LPIPS$\downarrow$ & SSIM$\uparrow$ & MSE$\downarrow$ & Distance$\downarrow$ & CLIP-Entire$\uparrow$ & CLIP-Edited$\uparrow$ & HPS$\uparrow$ & AS$\uparrow$ \\
\midrule

\multicolumn{11}{c}{\textbf{PIE-Bench}}\\
\midrule
\multicolumn{11}{l}{\emph{Inversion-Free}}\\

SDEdit~\cite{mengsdedit}
& 6.06
& 18.01
& 0.240
& 0.650
& 0.021
& 0.063
& 24.67
& 22.54 
& 27.08 & \underline{5.05} \\

FlowEdit~\cite{kulikov2025flowedit}
& 4.39
& 21.92
& \uwave{0.111}
& \uwave{0.833}
& 0.009
& \uwave{0.028}
& 25.19
& 22.54 
& 27.70 & 4.87 \\

\midrule
\multicolumn{11}{l}{\emph{Inversion-Based}}\\

OdeInv
& 5.44
& 13.44
& 0.349
& 0.620
& 0.071
& 0.130
& \textbf{26.56}
& \textbf{23.83} 
& \textbf{28.73} & 4.92 \\

FireFlow~\cite{dengfireflow}
& 5.89
& 18.15
& 0.207
& 0.738
& 0.026
& 0.057
& 25.69
& 22.56 
& 26.45 & 4.63 \\

Rf-Inversion~\cite{rout2025semantic}
& 5.00
& 20.59
& 0.186
& 0.706
& 0.013
& 0.042
& 25.26
& 22.34 
& \underline{28.21} & \uwave{4.98} \\

UniEdit-Flow~\cite{jiao2025unieditflowunleashinginversionediting}
& 3.00
& \textbf{29.45}
& \textbf{0.060}
& \textbf{0.900}
& \textbf{0.002}
& \textbf{0.011}
& \uwave{25.80}
& 22.34 
& 26.77 & 4.96 \\

RF-Solver~\cite{wang2025taming}
& 3.22
& \uwave{22.90}
& 0.140
& 0.819
& \uwave{0.007}
& 0.031
& \underline{26.09}
& \uwave{22.68} 
& 27.24 & \textbf{5.48} \\

\midrule
$h$-Flow (ours)
& \textbf{2.33}
& \underline{23.88}
& \underline{0.110}
& \underline{0.840}
& \underline{0.006}
& \underline{0.026}
& \uwave{25.80}
& \underline{22.80} 
& \uwave{27.79} & \uwave{4.98} \\
\midrule
\multicolumn{11}{c}{\textbf{PIE-Bench++}}\\
\midrule
\multicolumn{11}{l}{\emph{Inversion-Free}}\\

SDEdit~\cite{mengsdedit}
& 7.11
& 19.92
& 0.2300
& 0.680
& 0.014
& 0.090
& 22.50
& 21.80 
& 27.08 & 4.00 \\

FlowEdit~\cite{kulikov2025flowedit}
& 4.11
& 22.27
& \underline{0.1087}
& \uwave{0.841}
& \uwave{0.009}
& \uwave{0.031}
& 24.94
& 24.24 
& \uwave{27.70} & 4.86 \\

\midrule
\multicolumn{11}{l}{\emph{Inversion-Based}}\\

OdeInv
& 5.44
& 17.21
& 0.2400
& 0.720
& 0.028
& 0.080
& \uwave{25.50}
& \textbf{24.92} 
& \textbf{28.20} & 4.73 \\

FireFlow~\cite{dengfireflow}
& 5.67
& 18.95
& 0.1877
& 0.760
& 0.021
& 0.060
& \underline{25.71}
& \uwave{24.77} 
& 26.45 & 4.56 \\

Rf-Inversion~\cite{rout2025semantic}
& 4.44
& 21.13
& 0.1800
& 0.730
& 0.011
& 0.040
& 25.23
& 24.49 
& \underline{27.93} & \underline{5.18} \\

UniEdit-Flow~\cite{jiao2025unieditflowunleashinginversionediting}
& 3.56
& \textbf{29.46}
& \textbf{0.0600}
& \textbf{0.900}
& 0.018
& \textbf{0.018}
& 25.37
& 24.39 
& 26.96 & 4.89 \\

RF-Solver~\cite{wang2025taming}
& 2.89
& \uwave{23.69}
& 0.1285
& 0.832
& \underline{0.006}
& 0.033
& \textbf{25.79}
& \underline{24.89} 
& 27.24 & \textbf{5.43} \\

\midrule
$h$-Flow (ours)
& \textbf{2.78}
& \underline{24.30}
& \uwave{0.1090}
& \underline{0.850}
& \textbf{0.005}
& \underline{0.029}
& 25.48
& 24.54 
& 27.67 & \uwave{4.92} \\

\bottomrule
\end{tabular}
}
\end{table*}

% \noindent{\textbf{Quantitative evaluation}} From Table~\ref{tab:pie}, we have the following observations 1) For both benchmark, xxx achieves high 2) xxx achieves xxx 3) our achieves better trade-off 

\noindent{\textbf{Qualitative evaluation.}}
As shown in Figure~\ref{fig:pie-compare}, we can observe 1) \emph{Superior semantic alignment.} $h$-Flow accurately executes complex prompts, including subject translation (\eg, cat to dog in row 1, col 2) and localized element additions (\eg, adding flowers and earrings in row 5, col 2), avoiding the loss of subject identity commonly seen in ODEInv (col 7) and SDEdit (col 8).  2) \emph{Strict structural preservation.} Compared to FireFlow (col 6), our method perfectly preserves the background and unedited regions.  3) \emph{Optimal editing-fidelity balance.} Both visually and quantitatively, $h$-Flow demonstrates outstanding performance in balancing editability and fidelity. For instance, UniEdit (row 3, col 3) fails to achieve the editing action of ``kissing'' for the parrots, whereas our method (row 3, col 2) successfully implements this action while maintaining high structural consistency, achieving top-tier aesthetic quality and text alignment.

\noindent{\textbf{Quantitative evaluation}} From Table~\ref{tab:pie}, we have the following observations 1) For both benchmarks, UniEdit achieves the highest source consistency (\eg, highest PSNR and lowest MSE) but yields suboptimal target alignment and lower human preference scores (HPS). 2) Conversely, ODEInv achieves high target alignment (\eg, best CLIP-Edited scores) but severely degrades image quality and structural preservation, as evidenced by its lowest PSNR and highest Distance metrics. 3) Our proposed $h$-Flow achieves a significantly better trade-off. It ranks first in average across all 9 metrics on both PIE-Bench and PIE-Bench++, maintaining highly competitive source consistency while delivering top-tier target alignment and aesthetic quality (AS).

\begin{figure*}[t]
    \centering
    
    % --- 左半边：缩小图片展示空间，改为 0.40 (40% 宽度) ---
    \begin{minipage}[c]{0.40\linewidth}
        \centering
        \includegraphics[width=\linewidth]{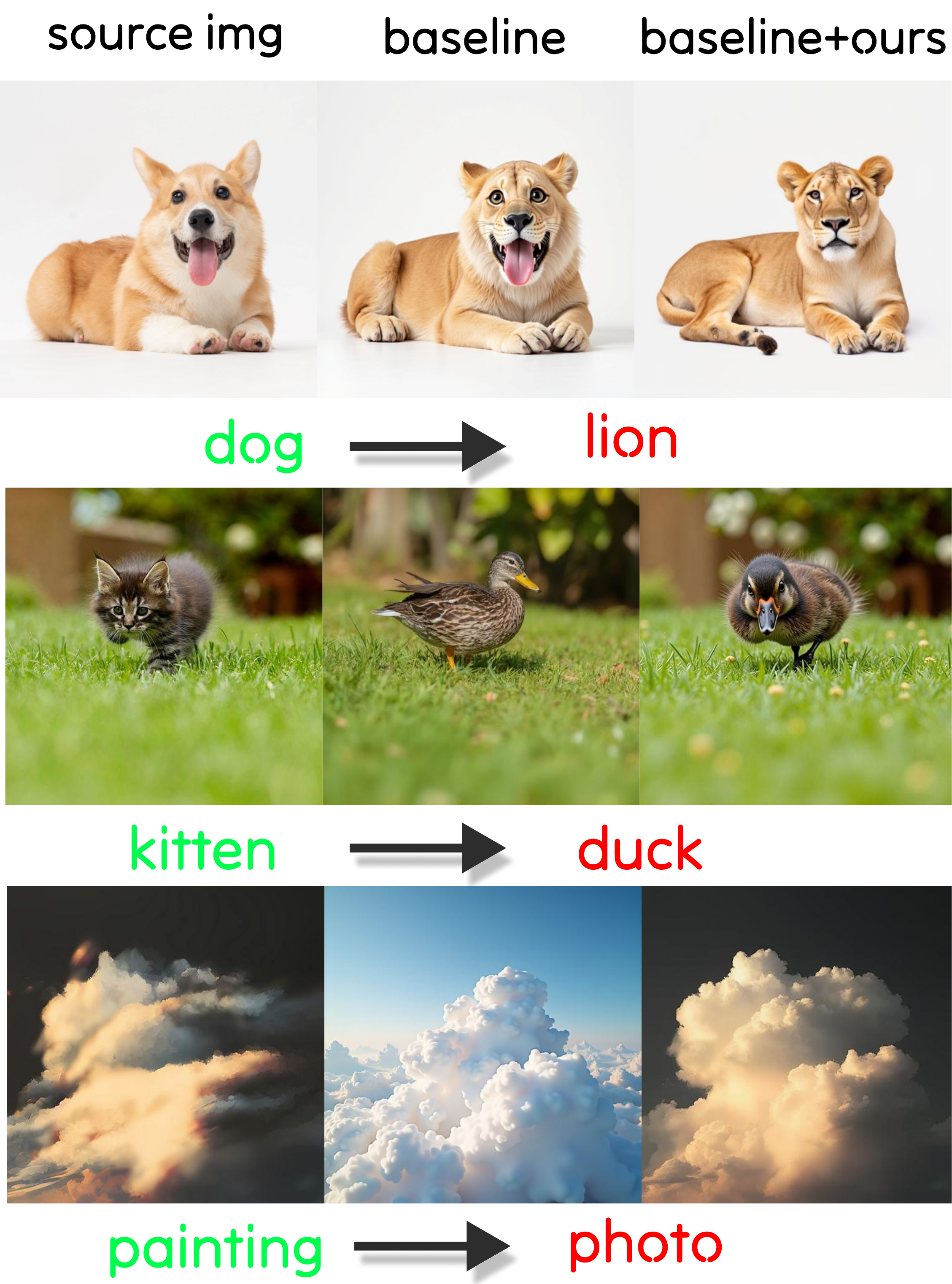} 
    \end{minipage}
    \hfill
    % --- 右半边：增大表格和文字说明的空间，改为 0.56 (56% 宽度) ---
    \begin{minipage}[c]{0.56\linewidth}
        \centering
        
        % 1. 右侧顶部：由于外部 minipage 变宽，resizebox 会自动将表格放大拉宽
        \resizebox{\linewidth}{!}{
        \begin{tabular}{l|cc|cc}
        \toprule
        \multirow{2}{*}{Method} & \multicolumn{2}{c|}{\emph{Source Consistency}} & \multicolumn{2}{c}{\emph{Target Alignment}} \\
        \cmidrule(lr){2-3} \cmidrule(lr){4-5}
        & SSIM$\uparrow$ & Distance$\downarrow$ & CLIP-Entire$\uparrow$ & CLIP-Edited$\uparrow$ \\
        \midrule
        RF-Inversion~\cite{rout2025semantic} 
        & 0.70 & \textbf{0.070} 
        & 25.26 & 22.34 \\
        \quad + $h$-Flow (ours)
        & \textbf{0.73} & 0.082 
        & \textbf{26.10} & \textbf{23.60} \\
        \midrule
        SDEdit~\cite{mengsdedit}
        & 0.65 & 0.063 
        & 24.67 & 22.54 \\
        \quad + $h$-Flow (ours)
        & \textbf{0.65} & \textbf{0.063} 
        & \textbf{25.28} & \textbf{23.10} \\
        \midrule
        ODE-Inv 
        & 0.62 & 0.130 
        & \textbf{26.56} & \textbf{23.83} \\
        \quad + $h$-Flow (ours)
        & \textbf{0.70} & \textbf{0.077} 
        & 26.44 & 23.65 \\
        \bottomrule
        \end{tabular}
        }
        
        \vspace{8pt} % 表格和文字之间留出呼吸空间
        
        % 2. 右侧中部：表格的标题说明
        \captionof{table}{$\boldsymbol{\uparrow}$ Quantitative comparison of the compatibility of $h$-Flow with diverse forward processes on PIE-Bench.}
        \label{tab:forward_ablation}
        
        \vspace{8pt} % 两个标题之间留出较大间距，形成视觉分隔
        
        % 3. 右侧底部：图片的标题说明
        \captionof{figure}{$\boldsymbol{\leftarrow}$ Qualitative comparison of $h$-Flow integrated with different forward processes (baselines from top to bottom: RF-Inversion, SDEdit, and ODE-Inv).}
        \label{fig:pie-compare-1}
        
    \end{minipage}
    
\end{figure*}
\subsection{Ablation Study}
\label{Ablation Study}
\noindent{\textbf{Compatibility with Forward Processes.}} Since our backward editing ODE is agnostic to the forward process, $h$-Flow can be integrated with diverse forward initialization strategies in a plug-and-play manner. To validate this, we combine $h$-Flow with three distinct forward processes: RF-Inversion, SDEdit (noise injection), and ODE-Inv. As shown in Table~\ref{tab:forward_ablation}, applying $h$-Flow consistently optimizes the trade-off between editability and fidelity across all settings. For RF-Inversion and SDEdit, $h$-Flow substantially improves target text alignment (CLIP-Edited gains of +1.26 and +0.56, respectively) while strictly preserving structural metrics. Notably, for ODE-Inv, the base forward process achieves high editability but suffers from severe structural degradation (SSIM drops to 0.62, Distance spikes to 0.130). In this case, integrating $h$-Flow acts as a powerful structural stabilizer, massively recovering source fidelity (SSIM surges to 0.70, Distance drops to 0.077) with only a negligible fluctuation in text alignment. As shown in Figure~\ref{fig:pie-compare-1}, $h$-Flow consistently delivers high-quality edits, robustly compensating for the inherent weaknesses of each base inversion method.

\noindent{\textbf{Hyperparameters.}} {1) Effect of $\lambda_{rec}$.} Table~\ref{tab:ablation_rec} sweeps $\lambda_{rec}$ with
$\lambda_{edit}{=}1.5$ fixed. As $\lambda_{rec}$ increases from 0 to 1.5, confirming that the reconstruction guidance progressively anchors the trajectory to the source. {We set 1.2 for trade-off.} {2) Effect of $\lambda_{edit}$.} Table~\ref{tab:ablation_edit} sweeps $\lambda_{edit}$ with $\lambda_{rec}{=}1.2$ fixed. Increasing $\lambda_{edit}$ from 0 to 1.5 significantly boosts editability (CLIP-Edited $20.49 \to 22.80$) with only a modest fidelity drop. Further increasing to 3.0 yields diminishing returns in editing but exacerbates fidelity degradation, thus we set $\lambda_{edit}{=}1.5$ as a suitable trade-off. Crucially, comparing Tables~\ref{tab:ablation_rec} and~\ref{tab:ablation_edit} demonstrates successful objective decoupling via our decomposition: sweeping $\lambda_{rec}$ barely affects editability (a mere 0.19-point drop in CLIP-Edited), whereas sweeping $\lambda_{edit}$ drives a massive 3.0-point swing, where each parameter predominantly controls its intended axis.

\begin{table*}[t]
    \centering
    
    % --- 第一排：放置两个 Caption（底部对齐 [b]，保证下方表格的起点绝对一致） ---
    \begin{minipage}[b]{0.48\linewidth}
        \caption{Effect of reconstruction strength $\lambda_{rec}$ (fixed $\lambda_{edit}{=}1.5$).}
        \label{tab:ablation_rec}
    \end{minipage}
    \hfill
    \begin{minipage}[b]{0.48\linewidth}
        \caption{Effect of editing strength $\lambda_{edit}$ (fixed $\lambda_{rec}{=}1.2$).}
        \label{tab:ablation_edit}
    \end{minipage}
    
    % \vspace{4pt} % 标题和表格之间保留优雅的呼吸间距
    
    % --- 第二排：放置两个 表格主体（顶部对齐 [t]，绝对水平对称） ---
    \begin{minipage}[t]{0.48\linewidth}
        \centering
        \resizebox{\linewidth}{!}{
        \begin{tabular}{c|cccc}
        \toprule
        $\lambda_{rec}$ & SSIM$\uparrow$ & Distance$\downarrow$ & CLIP-Entire$\uparrow$ & CLIP-Edited$\uparrow$ \\
        \midrule
        0.0 & 0.830 & 0.026 & \textbf{25.97} & \textbf{22.92} \\
        1.2 & 0.840 & 0.026 & 25.80 & 22.80 \\
        1.5 & \textbf{0.856} & \textbf{0.025} & 25.65 & 22.73 \\
        \bottomrule
        \end{tabular}
        }
    \end{minipage}
    \hfill
    \begin{minipage}[t]{0.48\linewidth}
        \centering
        \resizebox{\linewidth}{!}{
        \begin{tabular}{c|cccc}
        \toprule
        $\lambda_{edit}$ & SSIM$\uparrow$ & Distance$\downarrow$ & CLIP-Entire$\uparrow$ & CLIP-Edited$\uparrow$ \\
        \midrule
        0.0 & \textbf{0.846} & \textbf{0.020} & 22.84 & 20.49 \\
        1.5 & 0.840 & 0.026 & 25.80 & 22.80 \\
        3.0 & 0.804 & 0.034 & \textbf{26.26} & \textbf{23.49} \\
        \bottomrule
        \end{tabular}
        }
    \end{minipage}
    
\end{table*}

% --- 使用 wraptable 实现文字环绕 ---
% {r} 代表靠右侧 (right)，{0.48\linewidth} 代表表格占据 48% 的列宽
\begin{wraptable}{r}{0.5\linewidth}
    \vspace{-3em} % 微调：抵消 LaTeX 默认的顶部额外留白，让表格和第一行字绝对顶部对齐
    \centering
    \caption{Effect of orthogonal projection. The projection improves editing effectiveness without sacrificing source consistency.}
    \label{tab:ablation_proj}
    \resizebox{\linewidth}{!}{
    \begin{tabular}{l|cccc}
    \toprule
    Variant & SSIM$\uparrow$ & Distance$\downarrow$ & CLIP-Entire$\uparrow$ & CLIP-Edited$\uparrow$ \\
    \midrule
    w/o projection
    & 0.834 & \textbf{0.026}
    & 25.37 & 22.79 \\
    w/ projection
    & \textbf{0.840} & \textbf{0.026}
    & \textbf{25.80} & \textbf{22.80} \\
    \bottomrule
    \end{tabular}
    }
    \vspace{-1em} % 微调：缩减表格下方的默认留白，让文字更紧凑地包裹在表格下方
\end{wraptable}

% --- 文字部分紧跟其后，不需要加任何换行或额外排版，它会自动流淌并环绕 ---
\noindent\textbf{Effect of orthogonal projection.} Table~\ref{tab:ablation_proj} ablates projecting $\Delta_{edit}$ onto the orthogonal complement of $\tilde{v}_{rec}$. Under fixed hyperparameters ($\lambda_{rec}{=}1.2$, $\lambda_{edit}{=}1.5$), the projection significantly improves CLIP-Entire (+0.43) while keeping structural source fidelity strictly preserved (Distance remains completely unchanged at 0.026, and SSIM slightly improves to 0.840). Mechanistically, the raw $\Delta_{edit}$ contains components along $\tilde{v}_{rec}$ that inadvertently counteract reconstruction guidance. Eliminating this interference focuses the editing strictly on orthogonal semantic shifts, achieving enhanced text alignment at zero structural cost.

\section{Conclusion}
We presented \textbf{h-flow}, a principled flow-based image editing method that reformulates editing through Doob’s $h$-Transform and extends it to the deterministic rectified flow framework. By explicitly modeling source consistency and target alignment with dedicated $h$-functions and decoupling their effects via velocity orthogonal decomposition, our approach achieves a controllable trade-off between reconstruction and semantic editing while remaining training-free and plug-and-play. Future work will explore extending this framework to other modalities such as video or 3D generation and incorporating richer combinations of control conditions for more general multimodal editing.

\section{Acknowledgement}

This work was supported by the National Natural Science Foundation of China (62572194, 62472178, 62376244) and the Shanghai Frontiers Science Center of Molecule Intelligent Syntheses. This article was also supported by the Key Technology Research and Development Program Project of the Shanghai Science and Technology Commission under Grants (25511107200, 25511102400). Long Chen was supported by Hong Kong SAR Research Grants Council (RGC) General Research Fund (16219025), National Natural Science Foundation of China (NSFC) Young Scientists Fund Category B (62522216), Hong Kong SAR Research Grants Council (RGC) Early Career Scheme (26208924), and National Natural Science Foundation of China (NSFC) Young Scientists Fund Category C (62402408).

\bibliographystyle{splncs04}
\bibliography{main}

\newpage

\appendix
\setcounter{equation}{19}
\setcounter{section}{0}
\renewcommand{\thesection}{\Alph{section}}

\section*{Supplementary Material}

Text-based image editing requires a generated sample $x_0$ to satisfy two terminal constraints at $t{=}0$: source consistency to $x_0^{src}$ and target alignment with $c^{edit}$. This is precisely the problem of conditioning a stochastic process on terminal events---the classical setting of Doob's $h$-transform~\cite{doob1984classical}. 

The core challenge of applying this framework to Rectified Flow (RF) is that RF relies on a deterministic ODE, which structurally prohibits the standard $h$-transform. This supplementary material details how we mathematically resolve this obstacle, translating abstract probability constraints into robust geometric algorithms, and provides extended experimental validations. Throughout the supplementary material, we adopt the same mathematical conventions as the main text.

\medskip
\noindent\textbf{Supplementary Organization.}
This supplementary is organized as follows:
\begin{itemize}
  \item Section~\ref{sup:sign} justifies the negative sign
    in the RF bridge equation for reverse-time editing.
  \item Section~\ref{sup:equiv_sde} proves that our
    constructed It\^{o} SDE strictly shares the identical
    marginal distributions with the deterministic RF ODE.
  \item Section~\ref{sup:guidance} derives the closed-form
    reconstruction guidance and editing direction vectors.
  \item Section~\ref{sup:additive} analyses the additive
    guidance parameterization and its robustness against
    model prediction errors.
  \item Section~\ref{sup:ortho} formalizes the orthogonal
    decoupling guarantee and discusses numerical stability.
  \item Section~\ref{sup:complex} extends $h$-Flow to
    complex multi-target editing via prompt decomposition
    and per-sub-edit orthogonal projection.
% \item Section~\ref{sup:ablations} visually validates the
%   orthogonal projection through temporal heatmaps and
%   RGB examples.
\item Section~\ref{sup:additional_comparisons} provides
  additional comparative experiments, including
  generalization to other flow models, evaluation on
  additional benchmarks, and compatibility with stronger
  inversion methods.
\item Section~\ref{sup:extended_qualitative} demonstrates
  extended qualitative results across diverse editing
  scenarios.
\item Section~\ref{sup:limitations} discusses limitations
  and analyses typical failure cases.
\end{itemize}

% ============================================================
\section{Sign of the Guidance Term in the RF Bridge}
\label{sup:sign}
% ============================================================
% Proves: the negative sign in Eq.~\ref{eq:rf_bridge} (main text §4.1)
% Eq.~\ref{eq:rf_bridge
This section justifies the negative sign in the RF bridge equation (Eq.11) in the main text Sec. 4.1.

In the classical formulation, applying Doob's $h$-transform to a forward-time SDE $dx_t = f(x_t, t)\,dt + g(t)\,dw_t$ introduces an additive guidance term $+g^2 \nabla_{x_t}\!\log h$ to the drift, steering the process toward the conditioning event.

However, Rectified Flow generates samples by integrating the ODE backward in time, from $t{=}1$ (noise) to $t{=}0$ (data), meaning the integration step is negative ($dt < 0$). To ensure the trajectory is displaced toward states with a higher probability of reaching the target, the applied vector field must compensate for this reverse temporal direction. Consequently, the correct conditioned PF-ODE must subtract the guidance term:
\begin{equation}
\frac{dx_t}{dt} = v_\theta(x_t,t) - \frac{t}{1-t}\nabla_{x_t}\!\log h(x_t,t).
\end{equation}
This negative sign ensures that a discrete Euler step $\Delta x \approx (dx_t/dt) \Delta t$ produces a positive net displacement along the gradient $\nabla\!\log h$ (since $\Delta t < 0$), effectively driving the generative process toward the desired terminal constraints.

% ============================================================
\section{Equivalent SDE: Proof and Properties}
\label{sup:equiv_sde}
% ============================================================
% Proves: the marginal equivalence claimed in Sec.~4.1 of the main text

This section provides the complete proof that the equivalent SDE constructed in Sec.~4.1 of the main text shares the same marginal distributions as the RF ODE. To legally apply Doob's $h$-transform within the deterministic Rectified Flow (RF) framework, we first construct an equivalent It\^{o} SDE that strictly shares the identical marginal distribution family $\{p_t\}$.

\subsection{Premises and Forward-to-Reverse Connection}
According to the Score-SDE framework~\cite{songscore}, for any forward SDE $dx_t = f(x_t, t)\,dt + g(t)\,dw_t$, there inherently exists a corresponding probability flow ODE (PF-ODE) that shares the same marginals $p_t$, given by the form $dx_t = [f - \frac{g^2}{2}\nabla\!\log p_t]\,dt$.

Crucially, in continuous-time diffusion models, the reverse-time generating ODE is entirely governed by the marginal score $\nabla\!\log p_t$ of the forward process. Direct application of Doob's $h$-transform to a deterministic reverse ODE is mathematically ill-posed, as the absence of noise causes the transition probabilities to degenerate into Dirac delta functions, prohibiting score evaluation. To resolve this, we construct an equivalent \emph{forward} SDE to provide smooth, well-defined transition kernels $p(x_t \mid x_0)$. We then apply the $h$-transform to this forward process to derive the conditional score $\nabla\!\log p(x_t \mid c)$, which is ultimately substituted back into the reverse PF-ODE formulation for conditional sampling.

To make this forward SDE macroscopically equivalent to the deterministic RF ODE $dx/dt = v_\theta$, we force its PF-ODE drift to match the RF velocity field exactly:
\begin{equation}
f_{eq} - \frac{g^2}{2}\nabla\!\log p_t = v_\theta \quad \implies \quad f_{eq} = v_\theta + \frac{g^2}{2}\nabla\!\log p_t.
\end{equation}
Meanwhile, to satisfy the noise-free requirement at the clean data endpoint ($g(0)=0$) and to provide clean algebraic cancellations for the subsequent $h$-transform, we set the diffusion coefficient to $g_{eq}(t) = \sqrt{2t/(1{-}t)}$. The resulting equivalent SDE is:
\begin{equation}
dx_t = \left(v_\theta + \frac{t}{1{-}t}\nabla\!\log p_t\right) dt + \sqrt{\frac{2t}{1{-}t}}\,dw_t.
\label{eq:sup_equiv_sde}
\end{equation}

\subsection{Proof of Marginal Equivalence and Independence}
\begin{proposition}
% (Eq.~\ref{eq:sup_equiv_sde})
% The constructed equivalent SDE (Eq.22) shares the same marginal family $\{p_t\}$ with the RF ODE $dx/dt = v_\theta$, and the ultimately derived editing direction is independent of the specific choice of $g(t)$.

% The constructed equivalent SDE (Eq.~\ref{eq:sup_equiv_sde}) shares the same marginal family $\{p_t\}$ with the RF ODE $dx/dt = v_\theta$, and the ultimately derived editing direction is independent of the specific choice of $g(t)$.

The constructed equivalent SDE (Eq.22) shares the same marginal family $\{p_t\}$ with the RF ODE $dx/dt = v_\theta$, and the ultimately derived editing direction is independent of the specific choice of $g(t)$.

\end{proposition}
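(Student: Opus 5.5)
The proof has two parts: marginal equivalence via the Fokker--Planck equation, and $g$-independence via explicit cancellation of the bridge coefficient against the Tweedie scores.

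\emph{Marginal equivalence.} Let $p_t$ be the marginals of the RF ODE $dx/dt=v_\theta$, so that (taking $v_\theta$ exact, $v_\theta=\mathbb{E}[\epsilon-x_0\mid x_t]$) they satisfy the continuity equation $\partial_t p_t=-\nabla\cdot(p_t v_\theta)$. Write the Fokker--Planck equation for the SDE (Eq.22) with drift $f_{eq}=v_\theta+\frac{g^2}{2}\nabla\log p_t$ and diffusion $g_{eq}$:
\[
\partial_t q_t=-\nabla\cdot(q_t f_{eq})+\tfrac{g^2}{2}\Delta q_t .
\]
Substitute $q_t=p_t$ and use $p_t\nabla\log p_t=\nabla p_t$. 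Then
\[
-\nabla\cdot\bigl(p_t\tfrac{g^2}{2}\nabla\log p_t\bigr)=-\tfrac{g^2}{2}\Delta p_t ,
\]
which cancels the diffusion term exactly. What remains is the continuity equation, so $p_t$ solves the Fokker--Planck equation. Both processes start from the same law at the initial time (the data law $p_0$ for the forward SDE; $g_{eq}(0)=0$ makes the start noise-free). Under standard regularity, uniqueness of solutions to the Fokker--Planck equation then gives $q_t=p_t$ for all $t$.

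\textbf{Main obstacle: regularity.} The coefficient $g_{eq}^2=2t/(1-t)$ blows up as $t\to1$, and $\nabla\log p_t$ must be Lipschitz/integrable for uniqueness. I would prove the result on $[0,1-\eta]$ for every $\eta>0$ and then pass to the limit, using that $p_t\to\mathcal N(0,I)$ smoothly as $t\to1$. Note also that the argument works for any $g$, since the choice of $g$ only enters through the cancellation above.

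\emph{Independence of the editing direction from $g$.} Keep $g$ general. The bridge PF-ODE (Eq.~\eqref{eq:bridge_pfode}) has drift $\widetilde f=f_{eq}-\frac{g^2}{2}\nabla\log p_t=v_\theta$ for every $g$. The guidance is $-\frac{g^2}{2}\nabla\log h$, with
\[
\nabla\log h_{edit}\approx\nabla\log p(x_t\mid c^{edit})-\nabla\log p(x_t\mid c^{src}) .
\]
Apply the score--velocity identity Eq.~\eqref{eq:score_identity} to each conditional score, using $\hat x_0^c=x_t-t v_\theta^c$. Since $(1-t)\hat x_0^c-x_t=-t\,x_t-t(1-t)v_\theta^c$, the difference of scores is
\[
\nabla\log h_{edit}\approx-\frac{1-t}{t}\,(v_\theta^{edit}-v_\theta^{src}) .
\]
This score is a property of the marginals $p_t(\cdot\mid c)$ alone, which the first part shows are shared by every equivalent SDE. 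Therefore the only place $g$ enters is the prefactor $g^2/2$.

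\emph{Reading the statement correctly.} The velocity-space increment $-\frac{g^2}{2}\nabla\log h_{edit}$ equals $\frac{g^2}{2}\cdot\frac{1-t}{t}\,\Delta_{edit}$. Hence the \emph{direction} is always $\Delta_{edit}$, and $g$ only rescales its magnitude by a positive scalar. That scalar is absorbed into $\lambda_{edit}$, and the choice $g_{eq}$ makes it identically $1$. The same computation for $h_{rec}$ gives the direction $v^{src}_\theta-v_{rec}$, again up to a positive $g$-dependent scalar. So the independence claim should be stated as: the direction is $g$-independent, and only its scale depends on $g$. This is exactly what the algorithm uses once $\lambda_{edit}$ is treated as free.
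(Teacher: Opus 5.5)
Your marginal-equivalence part follows the paper's argument. You substitute $f_{eq}=v_\theta+\frac{g^2}{2}\nabla\log p_t$ into the Fokker--Planck equation, cancel the diffusion term via $p_t\nabla\log p_t=\nabla p_t$, and conclude from the shared continuity equation and initial law. You state it more carefully than the paper does (fixed drift, unknown $q_t$, appeal to uniqueness) and you flag regularity, which the paper ignores. Note, though, that the spatial Lipschitz constant of $\frac{t}{1-t}\nabla\log p_t$ can also blow up as $t\to0$ when $p_0$ is concentrated, so you need the truncation at that end too.

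For the $g$-independence claim your route is genuinely different and more accurate. The paper only asserts that $g^2/2$ cancels the factor $(1-t)/t$ coming from the Gaussian scores. That cancellation holds exactly when $g=g_{eq}$, which is how $g_{eq}$ was chosen. So the paper's argument shows that this particular $g$ yields $\Delta_{edit}$ with unit coefficient; it does not show any invariance in $g$.

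You instead keep $g$ arbitrary and use that the conditional scores depend only on the marginals, which the first part shows are common to all equivalent SDEs. This gives the increment $\frac{g^2}{2}\frac{1-t}{t}\,\Delta_{edit}$. The direction is $g$-free, the magnitude is not, and $g_{eq}$ is the unique choice that makes the weight identically $1$. The paper's route gives the concrete normalized field that Algorithm~1 uses. Yours gives a statement that is true for every $g$, and shows that $g$ acts only as a time-dependent guidance schedule.

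Two small corrections:
\begin{itemize}
\item The factor $\frac{g(t)^2(1-t)}{2t}$ generally varies with $t$. It can therefore be absorbed only by a schedule $\lambda_{edit}(t)$, not by the constant $\lambda_{edit}$ of Algorithm~1.
\item The velocity increment coming from $h_{rec}$ points along $v_{rec}-v_\theta^{src}$. What you wrote, $v_\theta^{src}-v_{rec}$, is the direction of $\nabla\log h_{rec}$ itself.
\end{itemize}
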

\begin{proof}
Writing the Fokker--Planck equation (FPE) for this equivalent SDE yields:
\begin{equation}
\frac{\partial p_t}{\partial t} = -\nabla\!\cdot(f_{eq}\,p_t) + \frac{g^2}{2}\nabla^2 p_t.
\end{equation}
Substituting $f_{eq}$ and expanding, while utilizing the identity $\nabla\!\log p_t \cdot p_t = \nabla p_t$, the second term $\frac{g^2}{2}\nabla^2 p_t$ is exactly canceled out, resulting in:
\begin{equation}
\frac{\partial p_t}{\partial t} = -\nabla\!\cdot(v_\theta\,p_t).
\end{equation}
This is exactly the continuity equation of the RF ODE. Since both satisfy the same partial differential equation with the identical initial condition $p_0$, their marginals at any arbitrary time $t$ must be strictly equal. Furthermore, in the subsequent $h$-transform derivations, the coefficient $g^2/2$ exactly cancels against the temporal factors in the Gaussian score gradients, assuring that the formulated editing velocity field is physically unique and $g$-independent.
\end{proof}
% ============================================================
\section{Full Derivation of the Guidance Terms}
\label{sup:guidance}
% ============================================================
% (Eq.~\ref{eq:h_rec_velocity}) and the editing direction (Eq.~\ref{eq:delta_edit}) 
This section provides the rigorous derivations of the reconstruction guidance (Eq.13) and the editing direction (Eq.16) presented in Sec.~4.2 of the main text, showing how abstract probability constraints are translated into computable velocity vectors.

\subsection{Source Consistency Guidance ($h_{rec}$)}
\label{sup:h_rec}
To enforce the source constraint $\{x_0 = x_0^{src}\}$, we define the terminal condition $h_{rec}(x_0,0) = \delta(x_0 - x_0^{src})$. By the harmonic property of Doob's $h$-transform, the guidance function at time $t$ evaluates to $h_{rec}(x_t,t) = p(x_0^{src} \mid x_t)$. Applying Bayes' rule, the gradient of the log-harmonic function decomposes into the likelihood and the marginal score:
\begin{equation}
\nabla_{x_t}\!\log h_{rec} = \nabla_{x_t}\!\log p(x_t \mid x_0^{src}) - \nabla_{x_t}\!\log p_t(x_t).
\label{eq:sup_bayes_rec}
\end{equation}
 % Eq.~\ref{eq:sup_bayes_rec} 
Under the Rectified Flow transition kernel $p(x_t \mid x_0) = \mathcal{N}\bigl((1{-}t)x_0,\,t^2 I\bigr)$, the likelihood gradient is exactly $(1{-}t)x_0^{src}/t^2 - x_t/t^2$. For the marginal score, we invoke the Tweedie identity and approximate the posterior mean using the source-conditioned velocity $\hat{x}_0 = x_t - t\,v_\theta^{src}$. Substituting these into Eq.25 and multiplying by the RF bridge coefficient $-\frac{t}{1-t}$ yields the precise bridge guidance:
\begin{align}
-\frac{t}{1{-}t}\nabla\!\log h_{rec} &\approx -\frac{t}{1{-}t} \left[ \frac{(1{-}t)\,x_0^{src} - x_t}{t^2} - \frac{(1{-}t)(x_t - t\,v_\theta^{src}) - x_t}{t^2} \right] \notag \\[4pt]
&= \frac{x_t - x_0^{src}}{t} - v_\theta^{src} \notag \\[4pt]
&= v_{rec} - v_\theta^{src}.
\end{align}
Adding this guidance term to the base velocity $v_\theta^{src}$ yields the conditioned velocity $v_{rec} = (x_t - x_0^{src})/t$, which matches the conditional velocity theorem in Flow Matching~\cite{lipman2022flow}, verifying its structural correctness.

\subsection{Editing Guidance ($h_{edit}$)}
\label{sup:h_edit}
For semantic editing, the harmonic property dictates $h_{edit}(x_t,t) = p(c^{edit} \mid x_t)$. Expanding this via Bayes' rule gives:
\begin{equation}
\nabla\!\log h_{edit} = \nabla\!\log p(x_t \mid c^{edit}) - \nabla\!\log p_t(x_t).
\end{equation}
To isolate the pure semantic modification and discard the shared underlying structure, we introduce a deliberate modeling approximation: replacing the unconditional marginal score $\nabla\!\log p_t(x_t)$ with the source-conditioned score $\nabla\!\log p(x_t \mid c^{src})$. This relative gradient formulation effectively captures the vector from the source concept to the target concept. Applying the Tweedie estimate for both conditional scores gives:
\begin{align}
\nabla\!\log \frac{p(x_t \mid c^{edit})}{p(x_t \mid c^{src})} &\approx \frac{(1{-}t)(\hat{x}_0^{edit} - \hat{x}_0^{src})}{t^2} \notag \\[4pt]
&= \frac{(1{-}t)\bigl[(x_t - t\,v_\theta^{edit}) - (x_t - t\,v_\theta^{src})\bigr]}{t^2} \notag \\[4pt]
&= -\frac{1{-}t}{t}(v_\theta^{edit} - v_\theta^{src}).
\end{align}
Multiplying this by the bridge coefficient $-\frac{t}{1-t}$, the temporal factors $(1{-}t)/t$ and $t/(1{-}t)$ exactly cancel out, resulting in a clean, time-independent velocity modification:
\begin{equation}
-\frac{t}{1{-}t} \nabla\!\log \frac{p(x_t \mid c^{edit})}{p(x_t \mid c^{src})} \approx v_\theta^{edit} - v_\theta^{src} = \Delta_{edit}.
\end{equation}

\subsection{Approximation Quality}
\label{sup:approx_quality}
In deriving $\Delta_{edit}$, the approximation error is proportional to $\|\delta v^{edit} - \delta v^{src}\|$, where $\delta v^c = v_\theta(\cdot,c) - v^*(\cdot,c)$ denotes the model's prediction error under condition $c$. Since these two evaluations share identical network parameters and inputs $(x_t, t)$, and the text conditions are highly correlated, their truncation and prediction errors largely cancel each other out during the subtraction. Thus, this relative formulation yields an extremely precise editing direction gradient. In contrast, the reconstruction guidance $v_{rec}$ utilizes the exact ground-truth $x_0^{src}$ via the known Gaussian kernel, making it inherently free from neural network prediction errors.

% ============================================================
\section{Analysis of the Additive Guidance Parameterization}
\label{sup:additive}
% ============================================================
% Eq.~\ref{eq:vbase}
In Eq.15 of the main text, we formulate the reconstruction-guided baseline using an additive parameterization: $\tilde{v}_{rec} = v_\theta^{src} + \lambda_{rec}\,v_{rec}$. This section provides the mathematical and geometric justifications for this specific design over the standard interpolation-based approach.

\subsection{Robustness Against Model Prediction Errors}
To understand the mathematical necessity of our additive design, it is instructive to consider the conventional interpolation-based alternative:
\begin{equation}
\tilde{v}_{rec}^{(\text{interp})} = (1{-}\lambda)\,v_\theta^{src} + \lambda\,v_{rec}, \quad \lambda \in [0,1].
\end{equation}
To demonstrate why our additive form is strictly superior, we decompose the neural network output as $v_\theta^{src} = v^* + \varepsilon$, where $v^*$ is the theoretically optimal conditional velocity and $\varepsilon$ is the unavoidable prediction error. The reconstruction velocity $v_{rec} = (x_t - x_0^{src})/t$ is derived directly from the ground-truth $x_0^{src}$ via the analytical Gaussian kernel, rendering it inherently error-free.

Under the interpolation parameterization, the velocity becomes:
\begin{equation}
\tilde{v}_{rec}^{(\text{interp})} = (1{-}\lambda)v^* + \lambda\,v_{rec} + (1{-}\lambda)\varepsilon.
\end{equation}
While increasing $\lambda$ suppresses the error term $(1{-}\lambda)\varepsilon$, it simultaneously diminishes the true model signal $(1{-}\lambda)v^*$. Consequently, strong guidance ($\lambda \to 1$) collapses the model's generative prior, leading to overly smoothed or corrupted integration trajectories.

In contrast, our additive parameterization yields:
\begin{equation}
\tilde{v}_{rec} = v^* + \lambda_{rec}\,v_{rec} + \varepsilon.
\end{equation}
This formulation preserves the exact optimal signal $v^*$ and the error $\varepsilon$ at a constant scale ($1.0$), while injecting a mathematically pure, error-free reconstruction pull $\lambda_{rec}\,v_{rec}$. This decoupling mathematically guarantees that we can inject an arbitrarily strong structural constraint without eroding the foundational generative capabilities of the pre-trained ODE.

\subsection{Geometric Interpretation via Convex Decomposition}
Directly adding an unbounded term $\lambda_{rec}\,v_{rec}$ to the velocity field might intuitively raise concerns about causing drastic directional distortion or unbounded trajectory divergence. To theoretically dispel this, we demonstrate that our additive form strictly factorizes into a bounded directional interpolation and a scalar magnitude amplification:
\begin{equation}
\tilde{v}_{rec} = \underbrace{(1{+}\lambda_{rec})}_{\text{Magnitude Amplification}} \cdot \underbrace{\left[\frac{1}{1{+}\lambda_{rec}}\,v_\theta^{src} + \frac{\lambda_{rec}}{1{+}\lambda_{rec}}\,v_{rec}\right]}_{\text{Bounded Directional Interpolation}}.
\end{equation}

The bracketed term is a strict convex combination with an effective weight $\alpha = \lambda_{rec}/(1{+}\lambda_{rec}) \in [0,1)$. This reveals a critical geometric property: no matter how large the coefficient $\lambda_{rec}$ becomes, the \emph{direction} of the modified velocity $\tilde{v}_{rec}$ is permanently bounded within the convex cone formed by the model's prior direction ($v_\theta^{src}$) and the exact analytical direction ($v_{rec}$). The prefactor $(1{+}\lambda_{rec})$ merely amplifies the magnitude of the velocity along this safe, interpolated direction. Thus, the additive parameterization geometrically accelerates the convergence towards the source image $x_0^{src}$ without violating the valid directional manifold of the vector field.

% ============================================================
\section{Orthogonal Decomposition: Motivation and Proof}
\label{sup:ortho}
% ============================================================

This section provides the theoretical expansion and formal mathematical proof for the orthogonal decomposition mechanism introduced in Sec.~4.4 of the main text, alongside practical considerations for its algorithmic implementation.

\subsection{Eliminating Fidelity--Editability Entanglement}
As discussed in the main text, the product-of-experts framework linearly superimposes the editing direction $\Delta_{edit}$ onto the reconstruction baseline $\tilde{v}_{rec}$. However, in the high-dimensional vector space of pre-trained flow models, the raw semantic direction $\Delta_{edit}$ naturally retains a parallel component along $\tilde{v}_{rec}$. 

Geometrically, this parallel component $\frac{\langle \Delta_{edit},\, \tilde{v}_{rec}\rangle}{\|\tilde{v}_{rec}\|^2}\tilde{v}_{rec}$ is fundamentally entangled with the structural geometry of the source image. If left unprojected, injecting $\lambda_{edit}\Delta_{edit}$ would force this parallel component to either redundantly reinforce or destructively counteract the reconstruction trajectory. This phenomenon is exactly what causes the notorious \emph{fidelity--editability entanglement} in standard guidance methods: tuning up the editing strength inadvertently shatters the source structure.
% Eq.~\ref{eq:ortho_proj}
By projecting $\Delta_{edit}$ onto the orthogonal complement (Eq.17), we surgically discard this interfering parallel component. The surviving orthogonal component, $\Delta_{edit}^{\perp}$, lies purely in the null space of the reconstruction constraint. Consequently, it carries strictly the \emph{novel semantic content} of $c^{edit}$ that is absent from the source trajectory, allowing the model to hallucinate new concepts without touching the preserved geometric foundation.

\subsection{Mathematical Proof of Strict Decoupling}
We now formally prove the geometric claim made in Sec.~4.4: varying $\lambda_{edit}$ modifies the edit without degrading fidelity, and varying $\lambda_{rec}$ adjusts fidelity without suppressing the edit direction.
% Eq.~\ref{eq:edit_ode}
\begin{proposition}[Strict Subspace Decoupling]
\label{prop:practical_decoupling}
Let $v = \tilde{v}_{rec} + \lambda_{edit}\,\Delta_{edit}^{\perp}$ be the total velocity driving the editing ODE (Eq.19), where $\tilde{v}_{rec} = v_\theta^{src} + \lambda_{rec}\,v_{rec}$ and $\Delta_{edit}^{\perp} \perp \tilde{v}_{rec}$ by construction. Then:
\begin{enumerate}
\item The structural fidelity, governed by the velocity component along $\tilde{v}_{rec}$, is absolutely invariant to any changes in the editing strength $\lambda_{edit}$.
\item Any modification to the reconstruction strength $\lambda_{rec}$ updates the baseline $\tilde{v}_{rec}$, and the projection dynamically adapts to ensure the editing energy remains entirely orthogonal to the new structural constraint.
\end{enumerate}
\end{proposition}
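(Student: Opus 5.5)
The plan is to reduce both claims to linear algebra at a fixed state $(x_t,t)$, with the velocity field viewed as a vector in $\mathbb{R}^d$ equipped with the Euclidean inner product. The proof works pointwise along the trajectory, under the standing assumption $\tilde{v}_{rec}\neq 0$; the degenerate case $\tilde{v}_{rec}=0$ is excluded, or handled by setting $\Delta_{edit}^{\perp}=\Delta_{edit}$. Let
\begin{equation*}
P_{\parallel} = \frac{\tilde{v}_{rec}\tilde{v}_{rec}^{\top}}{\|\tilde{v}_{rec}\|^2}
\end{equation*}
denote the orthogonal projector onto $\mathrm{span}(\tilde{v}_{rec})$, and let $P_{\perp}=I-P_{\parallel}$. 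Then Eq.~17 reads $\Delta_{edit}^{\perp}=P_{\perp}\Delta_{edit}$. The first step is to verify $\langle \Delta_{edit}^{\perp},\tilde{v}_{rec}\rangle=0$. Expanding the inner product gives $\langle\Delta_{edit},\tilde{v}_{rec}\rangle-\langle\Delta_{edit},\tilde{v}_{rec}\rangle\,\|\tilde{v}_{rec}\|^2/\|\tilde{v}_{rec}\|^2=0$.

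For item~1, I formalize ``structural fidelity'' as the scalar component of the total velocity along $\tilde{v}_{rec}$, namely $\langle v,\tilde{v}_{rec}\rangle/\|\tilde{v}_{rec}\|$, or equivalently $P_{\parallel}v$. By linearity and the orthogonality above,
\begin{equation*}
P_{\parallel}v = P_{\parallel}\tilde{v}_{rec}+\lambda_{edit}P_{\parallel}\Delta_{edit}^{\perp}=\tilde{v}_{rec}.
\end{equation*}
Since neither $\tilde{v}_{rec}$ nor $\Delta_{edit}^{\perp}$ depends on $\lambda_{edit}$ at a fixed state, this shows $\partial_{\lambda_{edit}}P_{\parallel}v=0$. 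The entire effect of $\lambda_{edit}$ is therefore confined to $P_{\perp}v=\lambda_{edit}\Delta_{edit}^{\perp}$. I would also note the Pythagorean identity $\|v\|^2=\|\tilde{v}_{rec}\|^2+\lambda_{edit}^2\|\Delta_{edit}^{\perp}\|^2$, which makes explicit that the editing energy adds only orthogonal magnitude.

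For item~2, I regard $\tilde{v}_{rec}(\lambda_{rec})=v_\theta^{src}+\lambda_{rec}v_{rec}$ as a function of $\lambda_{rec}$. The projector $P_{\perp}(\lambda_{rec})$ is recomputed from this vector, so the first step applies verbatim for every value of $\lambda_{rec}$ and yields $\langle\Delta_{edit}^{\perp}(\lambda_{rec}),\tilde{v}_{rec}(\lambda_{rec})\rangle=0$ identically. To support the ``not suppressing the edit'' part, I would show that $\Delta_{edit}$ itself does not depend on $\lambda_{rec}$ at a fixed state. This holds because $\Delta_{edit}=v_\theta^{edit}-v_\theta^{src}$, and neither network evaluation involves $\lambda_{rec}$. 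Consequently only the one-dimensional parallel part of $\Delta_{edit}$ changes as $\lambda_{rec}$ varies, and $\|\Delta_{edit}^{\perp}\|\ge\|\Delta_{edit}\|\,|\sin\angle(\Delta_{edit},\tilde{v}_{rec})|$, with equality holding.

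The main obstacle is that these statements are instantaneous, pointwise facts. Along the integrated ODE, the state $x_t$ itself depends on $\lambda_{edit}$ and $\lambda_{rec}$, so the subsequent evaluations of $\tilde{v}_{rec}$ and $\Delta_{edit}$ change with them. A trajectory-level invariance of fidelity therefore does not follow from the argument above. I would handle this by stating the proposition explicitly as a per-step (fixed $(x_t,t)$) decoupling for each Euler update in Algorithm~1. I would add a remark that the final-output invariance holds only to first order, in the sense that the displacement of a single step along $\tilde{v}_{rec}$ is exactly $\tilde{v}_{rec}\,\delta t$ regardless of $\lambda_{edit}$. I would not attempt a global claim.
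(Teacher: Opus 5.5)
Your proposal is correct and follows essentially the same route as the paper. The paper also works per step: it computes $\langle v,\tilde{v}_{rec}\rangle/\|\tilde{v}_{rec}\|=\|\tilde{v}_{rec}\|$, so the $\lambda_{edit}$-derivative vanishes, and it notes that recomputing the projection keeps $\langle\Delta_{edit}^{\perp},\tilde{v}_{rec}\rangle=0$ for every $\lambda_{rec}$. Your explicit restriction to a fixed state $(x_t,t)$ is a fair sharpening of what that argument actually establishes. One side remark is imprecise: it is not true that ``only the one-dimensional parallel part changes''. The removed direction itself rotates with $\lambda_{rec}$, so $\Delta_{edit}^{\perp}$ also varies, within $\mathrm{span}\{v_\theta^{src},v_{rec}\}$.
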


\begin{proof}
For claim (1): The structural fidelity is determined by the scalar projection of the total velocity $v$ along the direction of the reconstruction baseline $\tilde{v}_{rec}$. We evaluate this component mathematically:
\begin{equation}
\frac{\langle v, \tilde{v}_{rec}\rangle}{\|\tilde{v}_{rec}\|} = \frac{\langle \tilde{v}_{rec} + \lambda_{edit}\,\Delta_{edit}^{\perp}, \tilde{v}_{rec}\rangle}{\|\tilde{v}_{rec}\|} = \frac{\|\tilde{v}_{rec}\|^2 + \lambda_{edit} \cdot 0}{\|\tilde{v}_{rec}\|} = \|\tilde{v}_{rec}\|.
\end{equation}
Since this scalar component strictly equals $\|\tilde{v}_{rec}\|$, its partial derivative with respect to $\lambda_{edit}$ is exactly zero. This proves that scaling the target semantics operates on a strictly independent orthogonal subspace, leaving the source fidelity completely unperturbed.
% Algorithm~\ref{alg:editing}
For claim (2): When $\lambda_{rec}$ is adjusted, $\tilde{v}_{rec}$ structurally changes. According to Algorithm 1 (Lines 7--11), the orthogonal direction is dynamically recomputed at every single step as $\Delta_{edit}^{\perp} = \Delta_{edit} - \frac{\langle \Delta_{edit},\, \tilde{v}_{rec}\rangle}{\|\tilde{v}_{rec}\|^2}\,\tilde{v}_{rec}$. By the definition of vector projection, this newly formulated $\Delta_{edit}^{\perp}$ mathematically guarantees $\langle \Delta_{edit}^{\perp}, \tilde{v}_{rec}\rangle = 0$ against the updated baseline, preserving the decoupling indefinitely.
\end{proof}
% Algorithm~\ref{alg:editing}
\subsection{Practical Implementation Details}
This subsection addresses the practical handling of the terms evaluated in Algorithm 1 under discrete sampling schedules. In Line 6 of Algorithm 1, the exact reconstruction velocity $v_{rec} = (x_t - x_0^{src})/t$ involves a division by $t$. While this poses a theoretical singularity at $t = 0$ for continuous-time ODEs, our framework utilizes a discrete schedule (e.g., $N=28$ steps for FLUX). In practical editing pipelines, we perform \emph{partial inversion} and generation, deliberately terminating the trajectory at $t_1 > 0$ (e.g., stopping at step 24 out of 28). This discrete bounding effectively processes the calculation without encountering the $t=0$ limit, completely eliminating the need for heuristic magnitude clipping. Furthermore, for the projection denominator in Line 11 ($\|\tilde{v}_{rec}\|^2$), exact zero-norm cancellation is practically impossible during a valid generation manifold. Nonetheless, to ensure deterministic safety in lower-precision floating-point operations (e.g., FP16 or BF16 commonly used in diffusion inference), we append a standard arithmetic epsilon ($\epsilon = 10^{-8}$) to the denominator.

% ============================================================
%             PART II: EXPERIMENTAL SUPPLEMENTS
% ============================================================

% ============================================================
\section{Extension to Complex Multi-Target Editing}
\label{sup:complex}
% ============================================================

The standard $h$-Flow framework computes a single editing
direction $\Delta_{edit} = v_\theta^{edit} - v_\theta^{src}$
from one target prompt $c^{edit}$ and applies one global
orthogonal projection. This works well for single-target
edits (\eg, changing an object's species or color). For
complex prompts that request multiple simultaneous
modifications, two factors degrade the single-direction
strategy: (i)~current flow models have limited compositional
text understanding, so $v_\theta^{edit}$ from a long,
multi-target prompt may not faithfully encode every requested
change; and (ii)~even when individually captured, the
heterogeneous semantic signals are entangled in one vector
$\Delta_{edit}$, making a single projection unable to
independently control each sub-edit.

\subsection{Decompose-then-Project}

Given a complex prompt involving $K$ distinct editing targets,
we decompose it into $K$ atomic sub-prompts
$\{c_1^{edit}, \ldots, c_K^{edit}\}$, each modifying exactly
one attribute while keeping the rest identical to $c^{src}$.
For each sub-prompt, we compute an independent editing
direction and project it onto the orthogonal complement of
$\tilde{v}_{rec}$:
\begin{align}
\Delta_{edit}^{(k)} &= v_\theta(x_t, t, c_k^{edit})
  - v_\theta^{src}, \\[4pt]
\Delta_{edit}^{(k),\perp} &= \Delta_{edit}^{(k)}
  - \frac{\langle \Delta_{edit}^{(k)},\,
  \tilde{v}_{rec}\rangle}
  {\|\tilde{v}_{rec}\|^2}\,\tilde{v}_{rec}.
\end{align}
Each $\Delta_{edit}^{(k),\perp}$ independently satisfies
$\langle \Delta_{edit}^{(k),\perp},\,\tilde{v}_{rec}
\rangle = 0$, inheriting the fidelity-preserving guarantee
of Proposition~\ref{prop:practical_decoupling}. This
bypasses the compositional understanding bottleneck: each
$v_\theta(x_t, t, c_k^{edit})$ only needs to encode one
simple modification.

\subsection{Weighted Aggregation}

The projected sub-edits are aggregated via:
\begin{equation}
\frac{dx_t}{dt} = \tilde{v}_{rec}
  + \sum_{k=1}^{K} w_k \cdot
  \Delta_{edit}^{(k),\perp},
\label{eq:complex_ode}
\end{equation}
where $w_k \geq 0$ controls each sub-edit's strength.
Since every $\Delta_{edit}^{(k),\perp} \perp
\tilde{v}_{rec}$, the aggregate remains orthogonal to
$\tilde{v}_{rec}$ by linearity of inner products---no
additional projection is needed after summation. We
currently set uniform weights $w_k = \lambda_{edit}/K$,
which recovers standard $h$-Flow when $K{=}1$.

\subsection{Visualization}

% Figure~\ref{fig:complex-projection} compares three
% strategies. Without projection, increasing editing
% strength degrades source structure. The standard global
% projection alleviates this but struggles to fully execute
% all changes in the multi-target case, since the entangled
% $\Delta_{edit}$ loses per-target granularity after one
% projection. The decomposed variant (\emph{h}-Flow-complex)
% successfully applies each modification while preserving
% source fidelity. For single-target editing, both variants
% perform comparably, confirming no degradation from
% decomposition.
Figure~\ref{fig:complex-projection} illustrates the effect of orthogonal projection and its decomposed extension. Comparing the first two columns, the result without projection exhibits noticeable structural degradation in unedited regions, whereas the standard
global projection effectively preserves source fidelity (quantitatively validated in Tab.~5 of the main text).
Comparing the last two columns in the multi-target case (top row), the global projection struggles to fully
execute all requested changes, as the entangled $\Delta_{edit}$ loses per-target granularity after a single projection. The decomposed variant
(\emph{h}-Flow-complex) successfully applies each modification independently. In the single-target case (bottom row), all projection variants perform comparably, confirming no degradation from decomposition.
\begin{figure}[t]
\centering
\includegraphics[width=\linewidth]{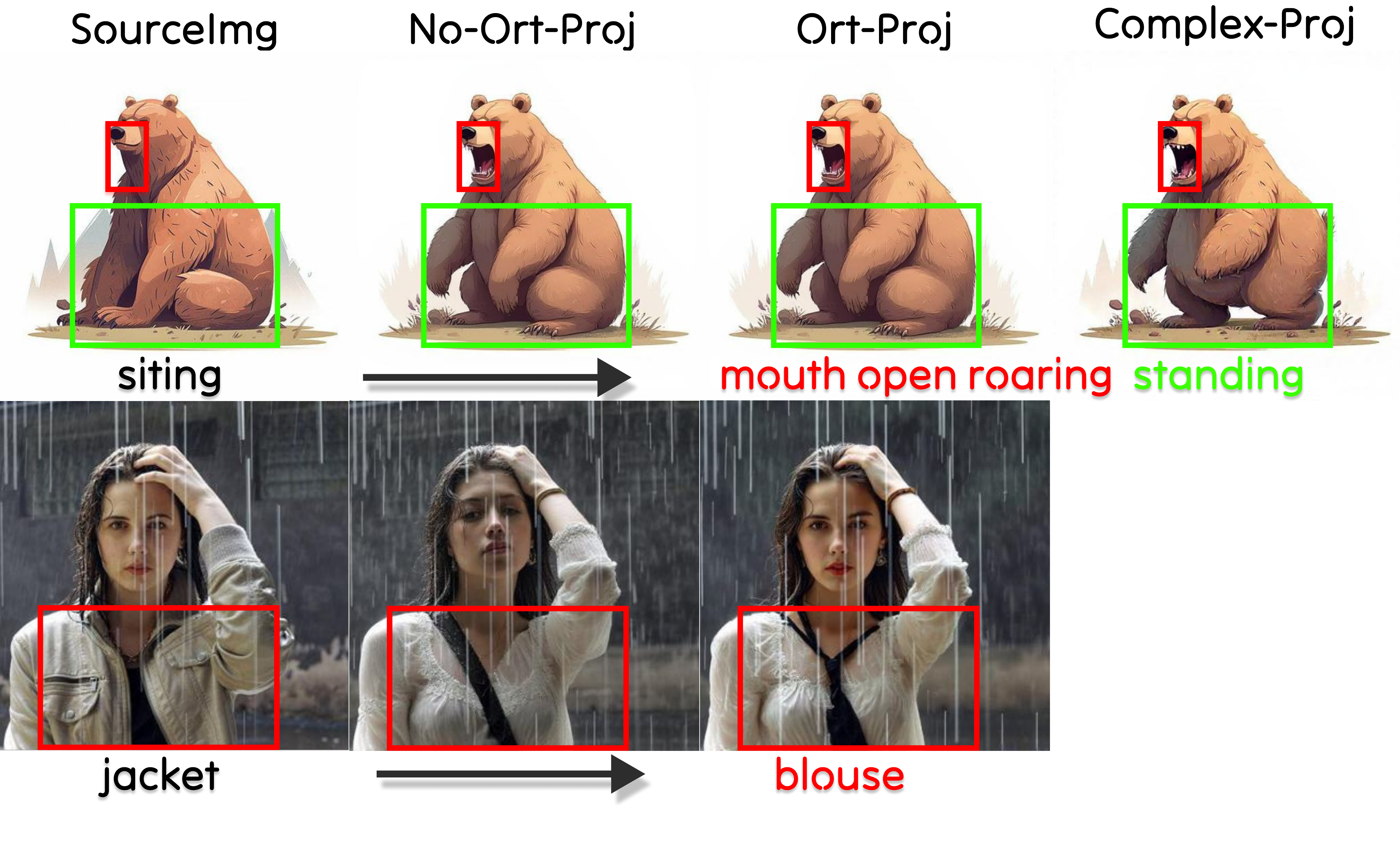}
% \caption{\textbf{Visualization of projection variants.}
% Top: multi-target editing. Bottom: single-target editing.
% Left to right: source, no projection, global projection,
% decomposed per-sub-edit projection (\emph{h}-Flow-complex).}
\caption{\textbf{Visualization of projection variants.} Top row: multi-target editing; Bottom row: single-target editing. ``Complex-Proj" denotes our extended strategy.}
\label{fig:complex-projection}
\end{figure}

\subsection{Discussion}

The uniform weighting treats all sub-edits equally, which
may be suboptimal when they differ in complexity or spatial
extent. Replacing static weights with content-adaptive
functions $w_k(x_t, t)$---\eg, emphasizing global changes
at early timesteps and localized edits at later
timesteps---is a promising direction for future work. The
decomposed variant requires $K{+}1$ forward passes per
step versus 2 for standard $h$-Flow; the projection
overhead ($K$ inner products) remains negligible.

% 新增章节，放在 G（投影可视化）之后、H（定性结果）之前
% G.1 的 stronger inversion 表也移到这里

\section{Additional Comparative Experiments}
\label{sup:additional_comparisons}

To further validate the generality of $h$-Flow, we
conduct additional experiments across different base
models, benchmarks, and inversion backends.

\subsection{Generalization Across Flow Models}
The main text evaluates $h$-Flow on FLUX.1-dev. To
verify that the framework generalizes to other
pre-trained flow models, we additionally evaluate
on Stable Diffusion 3 (SD3).
Tab.~\ref{tab:cross_model} compares $h$-Flow against
recent training-free editing methods. On SD3,
$h$-Flow achieves strong target alignment while
maintaining competitive source consistency,
confirming that the $h$-transform formulation is not
tied to a specific model but applies broadly to
flow matching models with Gaussian transition kernels.

\begin{table}[t]
\centering
\caption{Comparison across flow models on PIE-Bench.}
\label{tab:cross_model}
\resizebox{\linewidth}{!}{
\begin{tabular}{llccccc}
\toprule
\multirow{2}{*}{Method} & \multirow{2}{*}{Model}
  & \multicolumn{3}{c}{\textbf{Source Consistency}}
  & \multicolumn{2}{c}{\textbf{Target Alignment}} \\
\cmidrule(lr){3-5} \cmidrule(lr){6-7}
& & LPIPS$\downarrow$ & MSE$\downarrow$ & SSIM$\uparrow$
& CLIP\_T$\uparrow$ & CLIP\_E$\uparrow$ \\
\midrule
DNAEdit~\cite{xie2025dnaedit}  & FLUX & 0.087 & 0.005 & \textbf{0.866}
         & 24.52 & 22.06 \\
$h$-Flow & FLUX & 0.110 & 0.006 & 0.840
         & \textbf{25.80} & \textbf{22.80} \\
\midrule
FIA-Edit~\cite{yang2026fia} & SD3.5 & 0.0956 & 0.0059 & \textbf{0.868}
         & 25.55 & 22.51 \\
FTEdit~\cite{xu2025unveil}   & SD3.5 & \textbf{0.0894} & 0.0063 & 0.846
         & 24.62 & 22.66 \\
DRFS~\cite{beaudouin2026delta}     & SD3   & 0.0941 & 0.0068 & 0.846
         & 25.60 & 22.72 \\
$h$-Flow & SD3   & 0.086 & \textbf{0.0060} & 0.860
         & \textbf{25.96} & \textbf{23.04} \\
\bottomrule
\end{tabular}}
\end{table}

\subsection{Evaluation on DNA-Bench}
To assess robustness beyond PIE-Bench, we evaluate on
DNA-Bench~\cite{xie2025dnaedit}, a benchmark featuring
diverse and natural editing scenarios with detailed textual
descriptions. Since these prompts tend to be longer and
more descriptive than PIE-Bench, we apply the
decompose-then-project strategy
(Section~\ref{sup:complex}) to better handle the rich
textual content. Tab.~\ref{tab:dna_bench} shows that
$h$-Flow achieves comparable source consistency to
DNAEdit while delivering stronger target alignment.

\begin{table}[t]
\centering
\caption{Comparison on DNA-Bench.}
\label{tab:dna_bench}
\resizebox{\linewidth}{!}{
\begin{tabular}{llccccc}
\toprule
\multirow{2}{*}{Method} & \multirow{2}{*}{Model}
  & \multicolumn{3}{c}{\textbf{Source Consistency}}
  & \multicolumn{2}{c}{\textbf{Target Alignment}} \\
\cmidrule(lr){3-5} \cmidrule(lr){6-7}
& & LPIPS$\downarrow$ & MSE$\downarrow$ & SSIM$\uparrow$
& CLIP\_T$\uparrow$ & CLIP\_E$\uparrow$ \\
\midrule
DNAEdit~\cite{xie2025dnaedit}  & FLUX & \textbf{0.08} & \textbf{0.0047}
         & \textbf{0.87} & 27.64 & 22.50 \\
$h$-Flow & FLUX & \textbf{0.08} & 0.0055
         & \textbf{0.87} & \textbf{27.85} & \textbf{22.85} \\
\bottomrule
\end{tabular}}
\end{table}

\subsection{Extended Comparison on PIE-Bench++}

Tab.~\ref{tab:piebenchpp_extended} evaluates $h$-Flow
and its decomposed variant ($h$-Flow-complex,
Section~\ref{sup:complex}) on PIE-Bench++ against
SplitFlow~\cite{yoon2025splitflow}, a method
specifically designed for multi-target editing.
$h$-Flow-complex surpasses both the standard $h$-Flow
and SplitFlow in target alignment (CLIP\_T and CLIP\_E),
demonstrating that the decompose-then-project strategy
effectively handles complex multi-attribute prompts.

\begin{table}[t]
\centering
\caption{Extended comparison on PIE-Bench++.}
\label{tab:piebenchpp_extended}
\resizebox{\linewidth}{!}{
\begin{tabular}{llccccc}
\toprule
\multirow{2}{*}{Method} & \multirow{2}{*}{Model}
  & \multicolumn{3}{c}{\textbf{Source Consistency}}
  & \multicolumn{2}{c}{\textbf{Target Alignment}} \\
\cmidrule(lr){3-5} \cmidrule(lr){6-7}
& & LPIPS$\downarrow$ & MSE$\downarrow$ & SSIM$\uparrow$
& CLIP\_T$\uparrow$ & CLIP\_E$\uparrow$ \\
\midrule
SplitFlow~\cite{yoon2025splitflow}        & SD3 & \textbf{0.0896} & 0.0080
                 & 0.8559 & 26.05 & 25.17 \\
$h$-Flow         & SD3 & 0.1036 & \textbf{0.0056}
                 & \textbf{0.8570} & 25.50 & 24.80 \\
$h$-Flow-complex & SD3 & 0.1116 & 0.0080
                 & 0.8518 & \textbf{26.23} & \textbf{25.55} \\
\bottomrule
\end{tabular}}
\end{table}

\subsection{Compatibility with Stronger Inversion Methods}

Tab.~\ref{tab:strong_inv} extends the forward
compatibility study (Tab.~2 in the main text) to three
additional inversion methods. Since $h$-Flow provides
its own editing mechanism via the $h$-transform, we use
only the \emph{inversion stage} of each method to obtain
$x_{t_N}$, disabling their method-specific editing
components (\eg, attention injection or feature
replacement). $h$-Flow consistently improves all metrics
across all backends, confirming its forward-agnostic
design.

\begin{table}[t]
\centering
\caption{With stronger inversion on PIE-Bench.}
\label{tab:strong_inv}
\resizebox{\linewidth}{!}{
\begin{tabular}{llccccc}
\toprule
\multirow{2}{*}{Method} & \multirow{2}{*}{Model}
  & \multicolumn{3}{c}{\textbf{Source Consistency}}
  & \multicolumn{2}{c}{\textbf{Target Alignment}} \\
\cmidrule(lr){3-5} \cmidrule(lr){6-7}
& & LPIPS$\downarrow$ & MSE$\downarrow$ & SSIM$\uparrow$
& CLIP\_T$\uparrow$ & CLIP\_E$\uparrow$ \\
\midrule
RF-Solver-Inv~\cite{wang2025taming}    & FLUX & 0.25 & 0.034 & 0.67
                 & 25.23 & 22.90 \\
\quad+$h$-Flow   & FLUX & \textbf{0.23} & \textbf{0.029}
                 & \textbf{0.70} & \textbf{25.73}
                 & \textbf{23.25} \\
\midrule
FireFlow-Inv~\cite{dengfireflow}     & FLUX & 0.24 & 0.031 & 0.67
                 & 25.18 & 22.84 \\
\quad+$h$-Flow   & FLUX & \textbf{0.22} & \textbf{0.027}
                 & \textbf{0.71} & \textbf{25.98}
                 & \textbf{23.35} \\
\midrule
FTEdit-Inv~\cite{xu2025unveil}       & FLUX & 0.25 & 0.036 & 0.69
                 & 25.22 & 22.78 \\
\quad+$h$-Flow   & FLUX & \textbf{0.22} & \textbf{0.027}
                 & \textbf{0.72} & \textbf{25.58}
                 & \textbf{23.10} \\
\bottomrule
\end{tabular}}
\end{table}

% ============================================================

\begin{figure*}[!t]
    \centering
    \includegraphics[width=\textwidth,height=0.8\textheight,keepaspectratio]{pic/supplement-compare.pdf}
    \caption{\textbf{Extended Qualitative Comparisons.} We present additional editing results across diverse transformation types. The source and target prompts are displayed below each example. Compared to state-of-the-art baselines, our $h$-Flow consistently achieves the most precise semantic updates while strictly preserving the unedited source content (e.g., the space suit, the blue towel) and background layouts.}
    \label{fig:sup_qualitative}
\end{figure*}

% ============================================================
\section{Extended Qualitative Results}
\label{sup:extended_qualitative}
% ============================================================

In this section, we provide extended qualitative comparisons to further demonstrate the superiority of $h$-Flow across various text-guided image editing scenarios, including subject replacement and object addition.

% \begin{figure*}[htbp]
%     \centering
%     \includegraphics[width=\textwidth]{pic/supplement-compare.pdf} 
%     \caption{\textbf{Extended Qualitative Comparisons.} We present additional editing results across diverse transformation types. The source and target prompts are displayed below each example. Compared to state-of-the-art baselines, our $h$-Flow consistently achieves the most precise semantic updates while strictly preserving the unedited source content (e.g., the space suit, the blue towel) and background layouts.}
%     \label{fig:sup_qualitative}
% \end{figure*}

As demonstrated in Fig.~\ref{fig:sup_qualitative}, baseline methods frequently struggle with the inherent fidelity--editability trade-off across both standard and complex multi-prompt editing datasets. We highlight three typical failure modes in existing approaches that $h$-Flow effectively mitigates:

\noindent\textbf{1. Disrupted Source Consistency and Ineffective Editing:}  When introducing new concepts (e.g., adding "flowers in mouth" to the "dog wearing space suit"), baseline methods frequently struggle to control the editing strength. For instance, FlowEdit applies an excessively aggressive edit, completely destroying the source consistency. The introduction of the new textual condition inadvertently alters the original subject's identity and the underlying geometry of the space suit. Conversely, in localized attribute editing tasks (e.g., changing a "long haired cat" to a "short haired cat"), methods such as UniEdit fail entirely to execute the required modification. In contrast, $h$-Flow seamlessly inserts new concepts and manipulates target attributes while keeping the original subject's identity strictly consistent with the source.

\noindent\textbf{2. Background Distortion during Subject Replacement:} 
In the task of morphing the "meerkat" into a "lion", methods like ODEInv and SDEdit completely fail to retain the source image structure, regenerating an entirely different scene. Other flow-based methods (e.g., UniEdit and RF-Inversion) manage to keep the general layout but drastically alter the complex folds and shading of the "blue towel". While cross-species morphing inherently challenges strict structural preservation, our orthogonal projection mechanism concentrates the editing energy primarily on the subject's facial features. This significantly mitigates background distortion and largely maintains the structural integrity of the towel, achieving a substantially better fidelity-editability balance than existing baselines.

\noindent\textbf{3. Attribute Omission under Complex Multi-Prompt Edits:} 
The bottom three rows feature a more challenging dataset requiring simultaneous multi-attribute and compositional modifications. Under these complex prompts, baseline methods frequently struggle to satisfy all textual conditions simultaneously. For instance, when editing the "german shepherd dog"  to be "white", "sits", and have a "big mouth opened" , methods like UniEdit fail to achieve these multiple editing goals at once, omitting key target attributes. Similarly, in the final row, while FlowEdit successfully maintains the background layout, it completely fails to make the rabbit wear a "dress". $h$-Flow uniquely handles these highly entangled spatial and semantic modifications simultaneously. It successfully morphs the subject's posture and incorporates new concepts, all while effectively anchoring the background layout and environmental context.

\begin{figure*}[htbp]
    \centering
    % 请确保文件名和路径与你本地一致
    \includegraphics[width=\textwidth]{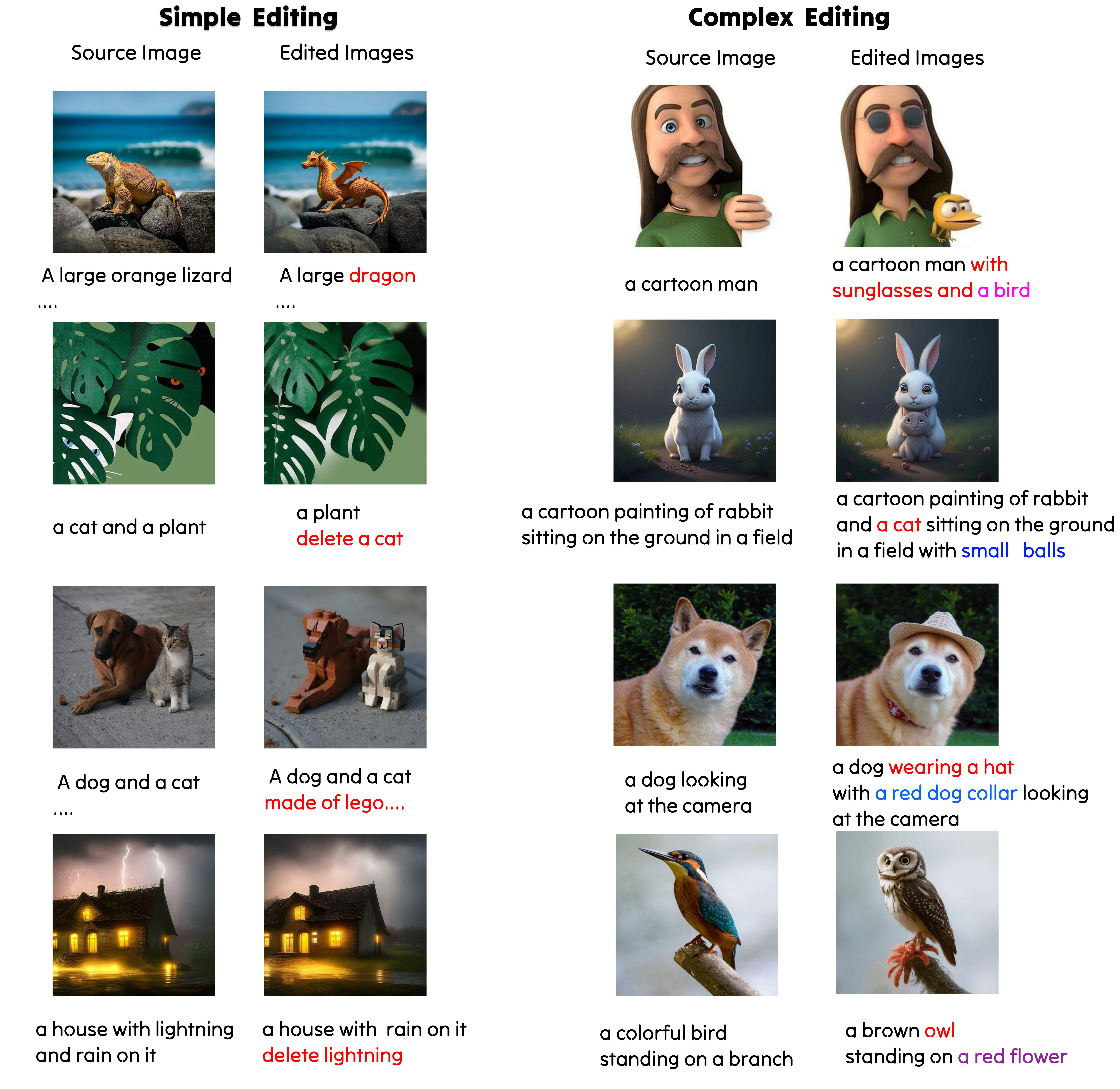} 
    \caption{\textbf{Versatility of $h$-Flow.} We demonstrate the capability of our method across various editing operations—including replacement, addition, deletion, and material modification—under both simple and complex prompt conditions. In all scenarios, $h$-Flow flawlessly executes the precise semantic transformations while rigorously protecting the integrity of the unedited source content.}
    \label{fig:sup_versatility}
\end{figure*}

% \begin{figure*}[htbp]
%     \centering
%     % 请确保文件名和路径与你本地一致
%     \includegraphics[width=\textwidth]{pic/main-pic-2.pdf} 
%     \caption{\textbf{Versatility of $h$-Flow.} We demonstrate the capability of our method across various editing operations—including replacement, addition, deletion, and material modification—under both simple and complex prompt conditions. In all scenarios, $h$-Flow flawlessly executes the precise semantic transformations while rigorously protecting the integrity of the unedited source content.}
%     \label{fig:sup_versatility}
% \end{figure*}

% \subsection{Versatility Across Editing Operations and Prompt Complexities}
% \label{sup:versatility}

To further illustrate the broad applicability and robustness of $h$-Flow, we present additional results encompassing a wide spectrum of editing operations and prompt complexities, as shown in Fig.~\ref{fig:sup_versatility}.

% ============================================================
\section{Limitations and Failure Cases}
\label{sup:limitations}
% ============================================================
While $h$-Flow demonstrates robust disentanglement and fidelity across diverse editing scenarios, it inherits certain limitations fundamentally tied to its reliance on the forward inversion trajectory and the strict mathematical boundaries of orthogonal projection. As illustrated in Fig.~\ref{fig:failure_cases}, we categorize these 8 challenging cases into six typical failure modes:

\begin{itemize}
    \item \textbf{Morphological Distortion (Fig.~\ref{fig:failure_cases}a):} When a target concept inherently necessitates a massive topological departure from the source (e.g., morphing round "fishes" into elongated "sharks", the strict geometric preservation anchored by $\tilde{v}_{rec}$ becomes an excessive constraint. The algorithm forcibly projects the new semantic energy into the original bounding boxes, resulting in unnaturally squeezed subjects.
    
    \item \textbf{Layout Hallucination (Fig.~\ref{fig:failure_cases}b, f):} If a target prompt (e.g., changing a "garland" to a "hat", or a "phone" to "coffee") triggers a dominantly strong, highly entangled unconditioned prior in the base model, it can overpower the reconstruction guidance, synthesizing unexpected layouts, altered outfits, or additional characters.
    
    \item \textbf{Identity Shift via Inversion Loss (Fig.~\ref{fig:failure_cases}e):} Since the reconstruction guidance depends directly on the numerical quality of the forward inversion trajectory, any high-frequency details lost during the forward noise-addition phase (e.g., the specific classical painting identity of the "green dress" portrait) are irreparably replaced by the model's default modern priors.
    
    \item \textbf{Under-editing in Global Shifts (Fig.~\ref{fig:failure_cases}c, g):} For edits involving global illumination or contextual shifts (e.g., transitioning a "bright sky" to a "dark sky", or Earth from "dimly" to "sunlit"), the raw edit vector exhibits exceptionally high cosine similarity with the reconstruction baseline. The orthogonal projection misinterprets these lighting changes as geometry-altering components and truncates them, leading to severe under-editing.
    
    \item \textbf{Structural Persistence (Fig.~\ref{fig:failure_cases}h):} When an edit rigorously requires the spatial removal and relocation of a major structural component (e.g., changing "hands hanging down" into "hands up"), the reconstruction prior strongly anchors the original limbs. The orthogonal component lacks the unconstrained energy to synthesize entirely new limbs while erasing old ones, leading to pose transition failures and suppressing coupled edits (e.g., "black shirt" to "white shirt").
    
    \item \textbf{Attribute Omission (Fig.~\ref{fig:failure_cases}d):} In multi-concept editing tasks, high-contrast structural priors can occasionally repel localized material updates. When transforming "meat balls" into "Tuna sushi", the strong specular highlights of the original ceramic plate create a rigid structural anchor that inadvertently suppresses the weaker semantic signal of the "metal plate" attribute.
\end{itemize}

\begin{figure*}[htbp]
    \centering
    % 请确保文件名与您本地上传的 8 张图拼合的 PDF 一致
    \includegraphics[width=\textwidth]{pic/failure_case.pdf} 
    \caption{\textbf{Typical Failure Cases of $h$-Flow.} We identify six failure modes constrained by inversion dependency and rigid orthogonal projection across 8 challenging scenarios: (a) Morphological distortion due to topological conflicts; (b, f) Layout hallucination driven by overwhelming target priors; (e) Identity shift caused by forward inversion information loss; (c, g) Under-editing (over-truncation) in global illumination shifts; (h) Structural persistence in extreme articulation changes; (d) Partial attribute omission in multi-target material edits.}
    \label{fig:failure_cases}
\end{figure*}

% \bibliographystyle{splncs04} 
% \bibliography{main}

\end{document}